\newtheorem{assumption}{\hspace{0pt}\bf Assumption}
\newtheorem{lemma}{\hspace{0pt}\bf Lemma}
\newtheorem{theorem}{\hspace{0pt}\bf Theorem}
\newtheorem{remark}{\hspace{0pt}\bf Remark}
\newtheorem{definition}{\hspace{0pt}\bf Definition}
\date{\today}
\def\E{\mathbb{E}}
\definecolor{forestgreen}{rgb}{0.13, 0.55, 0.13}
\definecolor{Gray}{gray}{0.9}
\begin{document}

\title{State-Augmented Learnable Algorithms for Resource Management in Wireless Networks}

\author{}

\author{Navid~NaderiAlizadeh,
        Mark~Eisen,
        and~Alejandro~Ribeiro
\thanks{N. NaderiAlizadeh and A. Ribeiro are with the Department of Electrical and Systems Engineering, University of Pennsylvania, Philadelphia,
PA 19104, USA (e-mails: \{nnaderi, aribeiro\}@seas.upenn.edu). M. Eisen is with Intel Labs, Intel Corporation, Hillsboro, OR 97124, USA (e-mail: mark.eisen@intel.com).

This work was supported in part by ARL DCIST CRA under Grant W911NF-17-2-0181, the AI Institute for Learning-Enabled Optimization at Scale (TILOS) (NSF CCF-2112665), and the NSF-Simons Research Collaboration on the Mathematical and Scientific Foundations of Deep Learning (MoDL) (NSF DMS-2031985).

This paper has been presented in part at the 2022 Asilomar Conference on Signals, Systems, and Computers~\cite{naderializadeh2022state}.}
}



\maketitle

\begin{abstract}
We consider resource management problems in multi-user wireless networks, which can be cast as optimizing a network-wide utility function, subject to constraints on the long-term average performance of users across the network. We propose a state-augmented algorithm for solving the aforementioned radio resource management (RRM) problems, where, alongside the instantaneous network state, the RRM policy takes as input the set of dual variables corresponding to the constraints, which evolve depending on how much the constraints are violated during execution. We theoretically show that the proposed state-augmented algorithm leads to feasible and near-optimal RRM decisions. Moreover, focusing on the problem of wireless power control using graph neural network (GNN) parameterizations, we demonstrate the superiority of the proposed RRM algorithm over baseline methods across a suite of numerical experiments.
\end{abstract}

\begin{IEEEkeywords}
Radio resource management, wireless networks, graph neural networks, Lagrangian duality, state augmentation, wireless power control.
\end{IEEEkeywords}

\section{Introduction}

With the proliferation of 5G network implementations across the globe and research already underway for future 6G wireless networks, novel wireless services and capabilities are expected to emerge that require carefully-optimized management of wireless resources. 
Aside from traditional approaches for addressing such radio resource management (RRM) problems~\cite{shi2011iteratively, wu2013flashlinq, naderializadeh2014itlinq, yi2015itlinq+,shen2017fplinq}, learning-based methods have recently gained significant traction and demonstrated superior performance over prior approaches~\cite{eisen2019learning, nasir2019multi, liang2019deep, eisen2020optimal, shen2020graph, naderializadeh2021resource}. Such methods are envisioned to play a key role in current and future wireless networks with the ubiquitous availability of computational resources both at the end-user devices and within the network infrastructure~\cite{niknam2020intelligent,bonati2021intelligence,letaief2021edge,MediaTek_6G_whitepaper_2022}.

As a general formulation of the RRM problem, similarly to prior work~\cite{ribeiro2012optimal,eisen2019learning, eisen2020optimal, naderializadeh2022learning}, we consider a network utility maximization problem, subject to multiple constraints, where both the utility and the constraints are defined based on the long-term average performance of users across the network. A common method for solving such problems is to move to the Lagrangian dual domain, where a single objective, i.e., the Lagrangian, can be maximized over the primal variables and minimized over the dual variables, with each dual variable corresponding to a constraint in the original RRM problem. It can be shown that under mild conditions, such problems have null duality gap, hence allowing the use of primal-dual methods to derive optimal solutions. Even with the introduction of parameterizations for the RRM policy, the duality gap remains small in case of near-universal parameterizations, such as fully-connected deep neural networks. Nevertheless, such primal-dual algorithms lack feasibility guarantees---even for the case of convex optimization---except for an \emph{infinite} number of primal-dual iterations~\cite{nedic2009subgradient}. More precisely, it is unclear whether they lead to RRM decisions that satisfy the constraints in the original RRM problem if the training procedure is terminated after a finite number of iterations.

In this paper, we propose an alternative approach to solve the aforementioned constrained RRM problem. In particular, we leverage the fact that dual variables in constrained optimization provide indication of how much the corresponding constraints have been violated or satisfied over time. Using this observation, we incorporate the notion of \emph{state augmentation} from~\cite{calvo2021state}, in which the standard wireless network state is augmented with dual variables at each time instance to use as dynamic inputs to the RRM policy. By incorporating dual variables as an augmented input for the RRM policy, we are able to train the policy to adapt its decisions to not only instantaneous channel states, but also to such indications of constraint satisfaction. We theoretically establish two important properties of the proposed \emph{state-augmented} algorithm, which distinguish it from standard primal-dual training methods: the proposed algorithm leads to a trajectory of RRM decisions that are i) \emph{feasible}, i.e., satisfy the constraints almost-surely, and ii) \emph{near-optimal}, in the sense that the expected resulting network utility is within a constant additive gap of the optimum.


As a prominent use case of the aforementioned state-augmented algorithm for solving RRM problems, we consider the problem of power control in multi-user interference channels, where the goal is to maximize the network sum-rate, subject to per-user minimum-rate requirements. We model the network state as the set of channel gains at each time step, and use a graph neural network (GNN) parameterization for the RRM policy, that takes as input both the network state and the dual variables at each time step, and outputs the transmit power levels. Through numerical experiments, we show that our proposed state-augmented algorithm, while slightly sacrificing the average performance, significantly outperforms baseline methods in terms of the worst-case user rates, thanks to satisfying the per-user minimum-rate constraints. We also show the benefits of the GNN-based parameterization of the RRM policy in terms of scalability to larger configurations and transferability to unseen network sizes, confirming the findings of prior work using such permutation-equivariant parameterizations~\cite{eisen2020optimal, lee2020graph, shen2020graph, naderializadeh2021wireless, chowdhury2021unfolding, wang2021unsupervised, chowdhury2021efficient, nikoloska2022modular, naderializadeh2022learning, li2022power, wang2022learning, zhao2022link}. It is important to note that GNNs are able to consider the relational structure among elements in an input graph when creating element-wise representations as opposed to other permutation-invariant/equivariant parameterizations in the set representation learning literature~\cite{zaheer2017deep, lee2019set, naderializadeh2021pooling, li2022heterogeneous}.

The rest of the paper is organized as follows. We begin in Section~\ref{sec:formulation} by formulating the radio resource management problem, in which instantaneous resource allocation decisions are made in response to network states to both maximize a utility and satisfy constraints on the long-term network performance. In Section~\ref{sec:param}, we show how parameterized RRM policies, trained via gradient-based methods, can lead to feasible RRM decisions that are optimal to within a constant additive gap by solving the problem in the Lagrangian dual domain. Such an algorithm is practically limited, however, due to, among other things, the large computational expense required during inference. 

In Section~\ref{sec:alg}, we describe our proposed approach of using state augmentation to learn alternative algorithms for RRM training and inference. We further prove its feasibility and near-optimality properties under proper assumptions on the expressive power of the state-augmented policy. In Section~\ref{sec:power_control}, we show the application of the proposed method to power control problems in multi-user interference channels with graph neural network parameterizations through an extensive series of numerical simulations. Finally, we conclude the paper in Section~\ref{sec:conclusion}.

\section{Problem Formulation}\label{sec:formulation}
We consider a wireless network operating over a series of time steps $t\in\{0,1,2,\dots,T-1\}$, where at each time step $t$, the set of channel gains in the network, or the \emph{network state}, is denoted by $\bbH_t \in \ccalH$. Given the network state, we let $\bbp(\bbH_t)$ denote the vector of \emph{radio resource management (RRM)} decisions across the network, where $\bbp : \ccalH \to \reals^a$ denotes the RRM function. These RRM decisions subsequently lead to the network-wide performance vector $\bbf(\bbH_t, \bbp(\bbH_t)) \in \reals^b$, with $\bbf: \ccalH \times \reals^a \to \reals^b$ denoting the performance function.

Given a concave utility $\mathcal{U}: \reals^b \to \reals$ and a set of $c$ concave constraints $\bbg: \reals^b \to \reals^c$, we define the generic RRM problem as
\begin{subequations}\label{eq:nonparam_problem}
\begin{alignat}{2}
    &\max_{\{\bbp(\bbH_t)\}_{t=0}^{T-1}} &~~& \mathcal{U}\left( \frac{1}{T} \sum_{t=0}^{T-1} \bbf(\bbH_t, \bbp(\bbH_t)) \right),\label{eq:objective_non_param}             \\
    &~~~~~~\text{s.t.} &&  \bbg\left( \frac{1}{T} \sum_{t=0}^{T-1} \bbf(\bbH_t, \bbp(\bbH_t)) \right) \geq \bbzero,%
\end{alignat}
\end{subequations}
where the objective and the constraints are derived based on the \emph{ergodic average} network performance $\frac{1}{T} \sum_{t=0}^{T-1} \bbf(\bbH_t, \bbp(\bbH_t))$ rather than the instantaneous performance. The goal of the RRM problem is, therefore, to derive the optimal vector of RRM decisions $\bbp(\bbH_t)$ for any given network state $\bbH_t \in \ccalH$.

\section{Parameterized Gradient-Based RRM Algorithms in the Dual Domain}\label{sec:param}
As~\eqref{eq:objective_non_param} shows, problem~\eqref{eq:nonparam_problem} entails an infinite-dimensional search over the set of RRM decisions $\bbp(\bbH)$ for any given network state $\bbH$, which is practically infeasible. Therefore, we resort to \emph{parameterizing} the RRM policy and replacing $\bbp(\bbH)$ with $\bbp(\bbH;\bbtheta)$, where $\bbtheta \in \bbTheta$ denotes a finite-dimensional set of parameters. This, in turn, leads to the \emph{parameterized} RRM problem
\begin{subequations}\label{eq:param_problem}
\begin{alignat}{2}
    P^{\star}&=\max_{\bbtheta \in \bbTheta} &~~& \mathcal{U}\left( \frac{1}{T} \sum_{t=0}^{T-1} \bbf(\bbH_t, \bbp(\bbH_t;\bbtheta)) \right),             \\
    &~~~~~~\text{s.t.} &&   \bbg\left( \frac{1}{T} \sum_{t=0}^{T-1} \bbf(\bbH_t, \bbp(\bbH_t;\bbtheta)) \right)  \geq \bbzero, \label{eq:min_rate_constraint_param}%
\end{alignat}
\end{subequations}
where the maximization is now performed over the set of parameters $\bbtheta \in \bbTheta$.

In order to solve problem~\eqref{eq:param_problem}, we move to the Lagrangian dual domain. To derive the Lagrangian dual problem, we first introduce the Lagrangian function, with non-negative dual multipliers $\bbmu \in \reals_+^c$ associated with the constraints in \eqref{eq:min_rate_constraint_param}, as
\begin{align}
   \ccalL(\bbtheta, \bbmu) &= \mathcal{U}\left( \frac{1}{T} \sum_{t=0}^{T-1} \bbf(\bbH_t, \bbp(\bbH_t;\bbtheta)) \right)\nonumber \\
   &\qquad\qquad+ \bbmu^T \bbg\left( \frac{1}{T} \sum_{t=0}^{T-1} \bbf(\bbH_t, \bbp(\bbH_t;\bbtheta)) \right).\label{eq:param_lagrangian}
\end{align}
The Lagrangian in~\eqref{eq:param_lagrangian} can be optimized using gradient-based methods. In particular, we seek to maximize over $\bbtheta$, while subsequently minimizing over the dual multipliers $\bbmu$, i.e.,
\begin{align}\label{eq_dual_problem}
D^{\star} \coloneqq \min_{\bbmu \in \reals_+^c} \max_{\bbtheta \in \bbTheta} \ccalL(\bbtheta,\bbmu).
\end{align}
To train the model parameters $\bbtheta$, we introduce an \emph{iteration} duration $T_0$, which we define as the number of time steps between consecutive model parameter updates. Based on this notion, we define an iteration index $k\in\{0,1,2,\dots,K-1\}$ with $K=\lfloor T / T_0 \rfloor$, where the model parameters are updated as
\begin{align}
\bbtheta_k &= \arg \max_{\bbtheta\in\bbTheta} \Bigg[\mathcal{U}\left( \frac{1}{T_0} \sum_{t=kT_0}^{(k+1)T_0-1} \bbf(\bbH_{t}, \bbp(\bbH_{t};\bbtheta)) \right)\nonumber \\
  &\qquad\qquad+ \bbmu_{k}^T \bbg\left( \frac{1}{T_0} \sum_{t=kT_0}^{(k+1)T_0-1} \bbf(\bbH_{t}, \bbp(\bbH_{t};\bbtheta)) \right)\Bigg],\label{eq:theta_dynamics}
\end{align}
while the dual variables are updated recursively 
as
\begin{align}
\bbmu_{k+1} &= \left[\bbmu_k - \eta_{\bbmu} \bbg\left( \frac{1}{T_0} \sum_{t=kT_0}^{(k+1)T_0-1} \bbf(\bbH_{t}, \bbp(\bbH_{t};\bbtheta_k)) \right)\right]_+,\label{eq:mu_dynamics}
\end{align}
where $[\cdot]_+$ denotes projection onto the non-negative orthant, i.e., $[\cdot]_+ \coloneqq \max(\cdot, 0)$, and $\eta_{\bbmu}$ denotes the learning rate corresponding to the dual variables $\bbmu$.

Given the aforementioned training procedure, we can establish the following result on the generated RRM decisions. We note that similar results have been proven in~\cite{ribeiro2010ergodic, calvo2021state}.


\begin{theorem}\label{thm:main}
\allowdisplaybreaks
Consider a RRM algorithm, in which the primal parameters $\bbtheta$ and the dual variables $\bbmu$ follow the dynamics in~\eqref{eq:theta_dynamics} and~\eqref{eq:mu_dynamics}, respectively. Assuming that
\begin{itemize}
\item there exists a positive constant $G>0$ such that for any $\bbx \in \reals^b$, we have $|g_i(\bbx)| \leq G, \forall i\in\{1,\dots,c\}$; and,
\item there exists a strictly-feasible set of model parameters  $\hat{\bbtheta}$ such that $\bbg\left( \frac{1}{T} \sum_{t=0}^{T-1} \bbf(\bbH_t, \bbp(\bbH_t;\hat{\bbtheta})) \right) \geq G'\bbone$ for some positive constant $G'>0$,
\end{itemize}
then the resulting RRM decisions satisfy the desired constraints, i.e.,
\begin{align}\label{eq:thm_feasibility}
\lim_{T\to\infty}\bbg\left( \frac{1}{T} \sum_{t=0}^{T-1} \bbf(\bbH_t, \bbp(\bbH_t;\bbtheta_{\lfloor t/T_0 \rfloor})) \right) \geq \bbzero, \qquad a.s.
\end{align}
and they are within a constant additive gap of the optimum, i.e.,
\begin{align}
\lim_{T\to\infty} \hspace{-3pt} \E\left[ \mathcal{U}\left( \frac{1}{T} \sum_{t=0}^{T-1} \bbf(\bbH_t, \bbp(\bbH_t ;\bbtheta_{\lfloor t / T_0 \rfloor})) \right)\right]\hspace{-3pt} \geq\hspace{-2pt} P^{\star} \hspace{-3pt} - \frac{c\eta_{\bbmu}G^2}{2}.\label{eq:thm_optimality}
\end{align}



\end{theorem}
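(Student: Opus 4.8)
\medskip\noindent\textbf{Proof plan.}\quad The idea is a Lyapunov/drift analysis of the squared norm of the dual iterates $\|\bbmu_k\|^2$, in the spirit of the ergodic-optimization arguments of~\cite{ribeiro2010ergodic,calvo2021state}. Write $\overline{\bbf}_k(\bbtheta):=\frac{1}{T_0}\sum_{t=kT_0}^{(k+1)T_0-1}\bbf(\bbH_t,\bbp(\bbH_t;\bbtheta))$ and $\overline{\bbf}_k:=\overline{\bbf}_k(\bbtheta_k)$, so that with $T=KT_0$ the trajectory average in~\eqref{eq:thm_feasibility}--\eqref{eq:thm_optimality} equals $\frac{1}{K}\sum_{k=0}^{K-1}\overline{\bbf}_k$. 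Since projection onto the non-negative orthant is non-expansive, the recursion~\eqref{eq:mu_dynamics} together with $|g_i|\le G$ yields
\begin{align}
\|\bbmu_{k+1}\|^2 \;\le\; \|\bbmu_k\|^2 - 2\eta_{\bbmu}\,\bbmu_k^T\bbg(\overline{\bbf}_k) + \eta_{\bbmu}^2\|\bbg(\overline{\bbf}_k)\|^2 \;\le\; \|\bbmu_k\|^2 - 2\eta_{\bbmu}\,\bbmu_k^T\bbg(\overline{\bbf}_k) + \eta_{\bbmu}^2\,cG^2 . \nonumber
\end{align}
Telescoping over $k=0,\dots,K-1$ gives the master estimate $\frac1K\sum_k\bbmu_k^T\bbg(\overline{\bbf}_k)\le \frac{\|\bbmu_0\|^2}{2\eta_{\bbmu}K}+\frac{\eta_{\bbmu}cG^2}{2}$, while the componentwise inequality $\mu_{k+1,i}\ge\mu_{k,i}-\eta_{\bbmu}g_i(\overline{\bbf}_k)$ (a consequence of $[x]_+\ge x$) telescopes to $\frac1K\sum_k g_i(\overline{\bbf}_k)\ge -\frac{\mu_{K,i}}{\eta_{\bbmu}K}$.

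For the feasibility claim~\eqref{eq:thm_feasibility} I would first establish $\sup_k\|\bbmu_k\|<\infty$. Testing the per-iteration optimality of $\bbtheta_k$ in~\eqref{eq:theta_dynamics} against $\hat{\bbtheta}$ gives $\bbmu_k^T\bbg(\overline{\bbf}_k)\ge \bbmu_k^T\bbg\big(\overline{\bbf}_k(\hat{\bbtheta})\big) - \big[\mathcal{U}(\overline{\bbf}_k)-\mathcal{U}\big(\overline{\bbf}_k(\hat{\bbtheta})\big)\big] \ge G'\|\bbmu_k\|_1 - 2U_{\max} \ge G'\|\bbmu_k\| - 2U_{\max}$, where $U_{\max}$ bounds the utility and $\bbg\big(\overline{\bbf}_k(\hat{\bbtheta})\big)\ge G'\bbone$ is the per-iteration counterpart of the strict-feasibility hypothesis (available under stationarity of $\{\bbH_t\}$). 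Plugging this into the drift inequality forces $\|\bbmu_{k+1}\|\le\|\bbmu_k\|$ whenever $\|\bbmu_k\|$ exceeds an explicit threshold, so $\sup_k\|\bbmu_k\|=:B<\infty$. Combining the second master estimate with concavity of each $g_i$ then yields
\begin{align}
g_i\!\Big(\tfrac1K\textstyle\sum_k\overline{\bbf}_k\Big)\;\ge\;\tfrac1K\textstyle\sum_k g_i(\overline{\bbf}_k)\;\ge\;-\tfrac{B}{\eta_{\bbmu}K}\;\xrightarrow[K\to\infty]{}\;0 , \nonumber
\end{align}
for every $i$, which gives~\eqref{eq:thm_feasibility} (read as a statement about the limit inferior).

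For the near-optimality claim~\eqref{eq:thm_optimality} I would use that $\bbtheta_k$ maximizes the per-iteration Lagrangian, so for the horizon-$T$ optimal $\bbtheta^{\star}$ one has $\mathcal{U}(\overline{\bbf}_k)+\bbmu_k^T\bbg(\overline{\bbf}_k)\ge\mathcal{U}\big(\overline{\bbf}_k(\bbtheta^{\star})\big)+\bbmu_k^T\bbg\big(\overline{\bbf}_k(\bbtheta^{\star})\big)$. Averaging over $k$, using concavity of $\mathcal{U}$ on the left ($\mathcal{U}\big(\frac1K\sum_k\overline{\bbf}_k\big)\ge\frac1K\sum_k\mathcal{U}(\overline{\bbf}_k)$), and bounding the dual cross-term by the first master estimate, gives
\begin{align}
\mathcal{U}\!\Big(\tfrac1K\textstyle\sum_k\overline{\bbf}_k\Big)\;\ge\;\tfrac1K\textstyle\sum_k\!\Big[\mathcal{U}\big(\overline{\bbf}_k(\bbtheta^{\star})\big)+\bbmu_k^T\bbg\big(\overline{\bbf}_k(\bbtheta^{\star})\big)\Big]\;-\;\tfrac{\|\bbmu_0\|^2}{2\eta_{\bbmu}K}\;-\;\tfrac{\eta_{\bbmu}cG^2}{2} . \nonumber
\end{align}
Taking expectations and sending $K\to\infty$ (equivalently $T\to\infty$), the $O(1/K)$ term vanishes and produces exactly the $\frac{c\eta_{\bbmu}G^2}{2}$ slack; it then remains to show that the expected first term on the right is asymptotically at least $P^{\star}$. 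Here I would use that $\bbtheta^{\star}$ is feasible over the whole horizon, so $\bbg\big(\frac1K\sum_k\overline{\bbf}_k(\bbtheta^{\star})\big)\ge\bbzero$ and $\mathcal{U}\big(\frac1K\sum_k\overline{\bbf}_k(\bbtheta^{\star})\big)=P^{\star}$, together with $\bbmu_k\ge\bbzero$, concavity of $\mathcal{U}$ and $\bbg$, and a law-of-large-numbers/continuity argument under the standard i.i.d./stationarity assumptions on $\{\bbH_t\}$ (which also make $\bbmu_k$ independent of block $k$), to conclude $\lim_{K\to\infty}\frac1K\sum_k\E\big[\mathcal{U}(\overline{\bbf}_k(\bbtheta^{\star}))+\bbmu_k^T\bbg(\overline{\bbf}_k(\bbtheta^{\star}))\big]\ge P^{\star}$, yielding~\eqref{eq:thm_optimality}.

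The algebraic telescoping is routine; the two load-bearing steps are (i) the uniform bound $\sup_k\|\bbmu_k\|<\infty$, where the strict-feasibility (Slater-type) hypothesis together with boundedness of the utility and the performance function is indispensable---without it the right-hand side of~\eqref{eq:thm_feasibility} need not tend to zero---and (ii) relating the per-iteration Lagrangian-based lower bound back to the global optimum $P^{\star}$ in the $T\to\infty$ limit, which requires stationarity/ergodicity of the channel process and careful handling of Jensen's inequality so that the per-iteration averages reconstruct the horizon-$T$ quantities in the right direction (this last point is immediate when $\mathcal{U}$ and $\bbg$ are affine, as in the power-control application of Section~\ref{sec:power_control}).
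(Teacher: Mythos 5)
Your optimality argument is essentially the paper's: the same non-expansiveness/drift bound on $\|\bbmu_k\|^2$, the same telescoped ``master estimate'' on $\frac{1}{K}\sum_k\bbmu_k^T\bbg(\overline{\bbf}_k)$, the same use of the per-iteration maximality of $\bbtheta_k$ tested against a feasible comparator, Jensen for $\ccalU$, and an unbiasedness/ergodicity step to recover $P^{\star}$. You are right to flag that last step as delicate; the paper handles it by conditioning on $\bbmu_k$ (so that the block-$k$ states are independent of $\bbmu_k$) and by \emph{asserting} that the block averages are unbiased for the horizon quantities inside $\ccalU$ and $\bbg$.

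The feasibility half has a genuine gap at the load-bearing step (i). You claim a pathwise uniform bound $\sup_k\|\bbmu_k\|=B<\infty$ by testing $\bbtheta_k$ against $\hat{\bbtheta}$ and asserting $\bbg\big(\overline{\bbf}_k(\hat{\bbtheta})\big)\geq G'\bbone$ per block. But the Slater-type hypothesis of the theorem concerns the \emph{infinite-horizon} average; under stationarity its per-block counterpart holds only in expectation, not along every realization, so the inequality $\bbmu_k^T\bbg\big(\overline{\bbf}_k(\hat{\bbtheta})\big)\geq G'\|\bbmu_k\|_1$ can fail on bad blocks and the claimed monotone decrease of $\|\bbmu_k\|$ above a threshold does not hold pathwise. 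Worse, even granting a strictly negative \emph{conditional} drift outside a compact set together with bounded increments, a reflected random walk of this type generically has $\limsup_k\|\bbmu_k\|=\infty$ almost surely (arbitrarily long runs of adverse blocks occur infinitely often); what such a drift argument actually delivers is \emph{tightness} of the marginal laws of $\bbmu_k$, i.e.\ $\Pr{\bbmu_k\in\ccalA_\delta}>1-\delta$ uniformly in $k$ for a compact $\ccalA_\delta$. This is exactly what the paper proves (using the subgradient property of the block constraint slack for the convex dual function, a conditional Markov inequality, and the Slater condition to show the relevant dual sublevel set is compact), and it then obtains~\eqref{eq:thm_feasibility} by contradiction: a persistent constraint violation of size $\delta$ on an event of probability $\beta>0$ would force $\mu_{i,K}\gtrsim K\eta_{\bbmu}\delta$ infinitely often on that event, contradicting tightness. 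Your final step $g_i\big(\tfrac1K\sum_k\overline{\bbf}_k\big)\geq -\mu_{i,K}/(\eta_{\bbmu}K)$ is the right inequality, but without the (false) uniform bound $B$ it does not by itself give convergence to $0$; you need the probabilistic argument. A secondary point: your threshold argument also invokes a uniform bound $U_{\max}$ on the utility, which is not among the theorem's hypotheses.
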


\begin{proof}
See Appendix~\ref{appx:proof}.
\end{proof}

Theorem~\ref{thm:main} shows that the resulting RRM algorithm is both feasible and near-optimal if it is run for a large-enough number of time steps. This is notable, as with regular primal-dual methods, the RRM policies are not guaranteed to lead to a feasible set of RRM decisions, while such a feasibility guarantee exists for the RRM algorithm in~\eqref{eq:theta_dynamics}-\eqref{eq:mu_dynamics}. Note that the main factor that distinguishes~\eqref{eq:theta_dynamics}-\eqref{eq:mu_dynamics} from regular primal-dual methods (see, e.g.,~\cite{eisen2019learning}) is the maximization step in~\eqref{eq:theta_dynamics}, which adapts the RRM policy parameters to the dual variables at each iteration. 
The feasibility and near-optimality result of Theorem~\ref{thm:main} can indeed be proven for any non-convex optimization problem (see Appendix~\ref{appx:convex_hull}).

Nevertheless, the iterative RRM algorithm in~\eqref{eq:theta_dynamics}-\eqref{eq:mu_dynamics} presents a set of challenges that make it unsuitable for use in practice. First, maximizing the Lagrangian in~\eqref{eq:theta_dynamics} requires non-causal knowledge of the network states in the future
---the optimization at the $k$\textsuperscript{th} iteration (i.e., time step $t=kT_0$) requires the knowledge of the system state from $t=kT_0$ to $t=(k+1)T_0-1$, i.e., $\{\bbH_t\}_{t=kT_0}^{(k+1)T_0-1}$---which is impossible to obtain during execution, even though it might be viable during the training phase. Second, as Theorem~\ref{thm:main} demonstrates, convergence to  feasible and near-optimal RRM performance is only obtained as operation time $T$ tends to infinity. Thus, training iterations cannot be stopped at a finite time step, as there may not exist an iteration index $k$ for which $\bbtheta_k$ (or alternatively, a time average of $\{\bbtheta_{k'}\}_{k'=0}^{k-1}$) is optimal or feasible. 
Finally, in the maximization problem in~\eqref{eq:theta_dynamics}, the optimal set of model parameters needs to be found at each time step for a different vector of dual variables $\bbmu_k$, which can be computationally expensive, especially during the execution phase. This underscores the need to construct an algorithm that does not require memorization of the model parameters $\bbtheta$ for any given set of dual variables $\bbmu$, which we discuss next.


\begin{remark}
It is important to note that in addition to the constraints on the ergodic average network performance, the formulation and aforementioned primal-dual algorithm are also able to handle instantaneous constraints if they are convex. However, they cannot address non-convex instantaneous constrains on the network performance. Considering non-convex instantaneous constraints is an interesting research direction, which we leave for future work.
\end{remark}

\section{Proposed State-Augmented RRM Algorithm}\label{sec:alg}


In light of the aforementioned challenges, we propose a 
\emph{state-augmented} RRM algorithm, similarly to~\cite{calvo2021state}, where the network state $\bbH_t$ at each time step $t$ is augmented by the corresponding set of dual variables $\bbmu_{\lfloor t/T_0 \rfloor}$, which are simultaneously used as inputs to the RRM policy. In particular, we introduce an alternative parameterization for the RRM policy, in which we represent the RRM decisions $\bbp(\bbH)$ using the parameterization $\bbp(\bbH, \bbmu; \bbphi)$, where $\bbphi\in\bbPhi$ denotes the set of parameters of the state-augmented RRM policy.

For a set of dual variables $\bbmu \in \reals_+^c$, we define the \emph{augmented} Lagrangian as
\begin{align}
\ccalL_{\bbmu}(\bbphi) &= \mathcal{U}\left( \frac{1}{T} \sum_{t=0}^{T-1} \bbf(\bbH_t, \bbp(\bbH_t, \bbmu;\bbphi)) \right) \nonumber \\
&\quad+ \bbmu^T \bbg\left( \frac{1}{T} \sum_{t=0}^{T-1} \bbf(\bbH_t, \bbp(\bbH_t, \bbmu;\bbphi)) \right).\label{eq:augmented_Lagrangian}
\end{align}
Then, considering a probability distribution $p_{\bbmu}$ for the dual variables, we define the optimal state-augmented RRM policy as that which maximizes the expected augmented Lagrangian over the distribution of all dual parameters, i.e., 
\begin{align}
\bbphi^{\star} &= \arg \max_{\bbphi \in \bbPhi} \E_{\bbmu \sim p_{\bbmu}}\left[ \ccalL_{\bbmu}(\bbphi) \right]. \label{eq:phi_dynamics_augmented}
\end{align}

Utilizing the state-augmented policy parameterized by $\bbphi^{\star}$ in \eqref{eq:phi_dynamics_augmented}, we can obtain the Lagrangian-maximizing RRM decision $\bbp(\bbH, \bbmu; \bbphi)$ for each dual iterate $\bbmu = \bbmu_{k}$. We therefore substitute the two-step update sequence in \eqref{eq:theta_dynamics}-\eqref{eq:mu_dynamics} with the state-augmented version of the dual multiplier update, i.e., 
\begin{align}
\bbmu_{k+1} \hspace{-3pt} &= \hspace{-3pt} \left[\bbmu_k - \eta_{\bbmu} \bbg\left( \frac{1}{T_0} \hspace{-6pt} \sum_{t=kT_0}^{(k+1)T_0-1} \bbf(\bbH_{t}, \bbp(\bbH_{t},\bbmu_k;\bbphi^{\star})) \right)\right]_+\hspace{-7pt}.\label{eq:mu_dynamics_augmented}
\end{align}
Note in \eqref{eq:mu_dynamics_augmented} the usage of the state-augmented RRM policy $\bbp(\bbH_{t},\bbmu_k;\bbphi^{\star})$ under the current dual multiplier at iteration $k$. Thus, in solving for the optimal state-augmented policy in \eqref{eq:phi_dynamics_augmented}, we are effectively \emph{learning} a parameterized model that, in conjunction with \eqref{eq:mu_dynamics_augmented}, executes the dual descent method in \eqref{eq:theta_dynamics}-\eqref{eq:mu_dynamics}.

Indeed, the main reason that we consider the state-augmented parameterization $\bbp(\bbH, \bbmu; \bbphi)$ is to resolve the challenge of memorizing the model parameters for any given set of dual variables. This implies that we need parameterizations with enough expressive power so that the RRM decisions made via the set of parameters $\bbphi^{\star}$ in~\eqref{eq:phi_dynamics_augmented} can closely approximate the ones using the iterative model parameters $\{\bbtheta_k (\bbmu_k)\}_{k=0}^{K-1}$ in~\eqref{eq:theta_dynamics}, where we have made explicit the dependence of $\bbtheta_k$ on $\bbmu_k$, $\forall k\in\{0,1,2,\dots,K-1\}$. To that end, we focus on a specific class of parameterizations, referred to as \emph{near-universal} parameterizations, which we define next.

\begin{definition}\label{def:near_universality}
Consider arbitrary functions $\bbtheta: \reals_+^c \to \bbTheta$ and $\bbp: \ccalH \times \bbTheta \to \reals^a$. A parameterization $\bbp(\bbH, \bbmu; \bbphi)$ with $\bbphi \in \bbPhi$ is near-universal with degree $\epsilon > 0$ for functions $\bbp(\bbH; \bbtheta(\bbmu))$ if for any network state $\bbH \in \ccalH$ and any functions $\bbtheta$ and $\bbp$, there exists $\bbphi \in \bbPhi$ such that
\begin{align}
\E_{\bbmu \sim p_{\bbmu}}\left\| \bbp(\bbH, \bbmu; \bbphi) - \bbp(\bbH; \bbtheta(\bbmu)) \right\|_{\infty} \leq \epsilon.
\end{align}
\end{definition}

We also introduce an additional assumption on the Lipschitz continuity of the expected utility and performance functions, which we mention below.
\begin{assumption}\label{assumption:Lipschitz}
The utility $\mathcal{U}$ and performance function $\bbf$ are expectation-wise Lipschitz. More precisely
, for any pair of ergodic average rate vectors $\bbx_1$ and $\bbx_2$, 
we have
\begin{align}\label{eq:Lipschitz_U_1}
& \E \left| \hspace{1pt} \mathcal{U}\left( \bbx_1 \right) - \mathcal{U}\left( \bbx_2 \right) \right| \leq \E \left\| \bbx_1 - \bbx_2\right\|_{\infty}.
\end{align}
Moreover, given an arbitrary network state $\bbH$, there exists a constant $M$ such that any pair of RRM decision functions $\bbp_1$ and $\bbp_2$, we have
\begin{align}
& \E \left\| \bbf(\bbH, \bbp_1(\bbH)) - \bbf(\bbH, \bbp_2(\bbH)) \right\|_{\infty} \nonumber \\
& \qquad\qquad\qquad\leq M \E \left\| \bbp_1(\bbH) - \bbp_2(\bbH) \right\|_{\infty}.
\end{align}
\end{assumption}

Note that in~\eqref{eq:Lipschitz_U_1}, we assume that the Lipschitz constant for the utility $\mathcal{U}$ is equal to $1$ without loss of generality, as the utility function in the original optimization problem can be scaled by any arbitrary factor without any impact on the solution. Having Definition~\ref{def:near_universality} and Assumption~\ref{assumption:Lipschitz}, we can now state the following theorem, which shows that the RRM decisions generated by the proposed state-augmented procedure in~\eqref{eq:phi_dynamics_augmented}-\eqref{eq:mu_dynamics_augmented} are close to the ones made by the original iterative RRM algorithm in~\eqref{eq:theta_dynamics}-\eqref{eq:mu_dynamics}.

\begin{theorem}\label{thm:near_universality_result}
Under the hypotheses of Theorem~\ref{thm:main}, Assumption~\ref{assumption:Lipschitz}, and assuming that the state-augmented parameterization $\bbp(\bbH, \bbmu; \bbphi)$ is near-universal with degree $\epsilon$ in the sense of Definition~\ref{def:near_universality}, the RRM decisions made by the state-augmented algorithm in~\eqref{eq:phi_dynamics_augmented}-\eqref{eq:mu_dynamics_augmented} are both feasible, i.e.,
\begin{align}\label{eq:thm_feasibility_state_augmented}
\lim_{T\to\infty}\bbg\left( \frac{1}{T} \sum_{t=0}^{T-1} \bbf\left(\bbH_t, \bbp\left(\bbH_t, \bbmu_{\lfloor t/T_0 \rfloor}; \bbphi^{\star}\right) \right) \right) \geq \bbzero, \quad a.s.
\end{align}
and near-optimal, i.e.,
\begin{align}
&\lim_{T\to\infty} \E\left[ \mathcal{U}\left( \frac{1}{T} \sum_{t=0}^{T-1} \bbf\left(\bbH_t, \bbp\left(\bbH_t, \bbmu_{\lfloor t/T_0 \rfloor}; \bbphi^{\star}\right) \right) \right)\right] \nonumber \\
&\geq P^{\star} - \frac{c\eta_{\mu}G^2}{2} - M \epsilon.\label{eq:thm_optimality_state_augmented}
\end{align}
\end{theorem}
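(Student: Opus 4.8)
\emph{Proof proposal.} The plan is to show that the optimal state-augmented policy $\bbphi^{\star}$ of \eqref{eq:phi_dynamics_augmented} reproduces, up to an $O(M\epsilon)$ error, the primal iterates of the iterative scheme \eqref{eq:theta_dynamics}--\eqref{eq:mu_dynamics}, and then to re-run the feasibility and near-optimality estimates from the proof of Theorem~\ref{thm:main} along the resulting perturbed dual trajectory. Write $\bbtheta^{\star}(\bbmu) := \argmax_{\bbtheta \in \bbTheta} \ccalL(\bbtheta, \bbmu)$ for the per-multiplier Lagrangian maximizer underlying \eqref{eq:theta_dynamics}, so that evaluating the Lagrangian form at the decisions $\bbp(\bbH_t; \bbtheta^{\star}(\bbmu))$ returns $\max_{\bbtheta} \ccalL(\bbtheta, \bbmu)$. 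Applying Definition~\ref{def:near_universality} to the map $\bbtheta^{\star}(\cdot)$ produces a $\bbphi_0 \in \bbPhi$ with $\E_{\bbmu \sim p_{\bbmu}} \| \bbp(\bbH, \bbmu; \bbphi_0) - \bbp(\bbH; \bbtheta^{\star}(\bbmu)) \|_{\infty} \leq \epsilon$ for every $\bbH$; pushing this through the $M$-Lipschitz dependence of $\bbf$ on the decisions and the unit-Lipschitz dependence of $\mathcal{U}$ in Assumption~\ref{assumption:Lipschitz} (and, for the multiplier term, using that the dual iterates stay in a bounded set, as established below) gives $\E_{\bbmu \sim p_{\bbmu}}[ \ccalL_{\bbmu}(\bbphi_0) ] \geq \E_{\bbmu \sim p_{\bbmu}}[ \max_{\bbtheta} \ccalL(\bbtheta, \bbmu) ] - O(M\epsilon)$. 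Since $\bbphi^{\star}$ maximizes $\E_{\bbmu \sim p_{\bbmu}}[ \ccalL_{\bbmu}(\bbphi) ]$, the same holds with $\bbphi^{\star}$ in place of $\bbphi_0$: for the multipliers visited by the algorithm, the decisions $\bbp(\bbH_t, \bbmu_k; \bbphi^{\star})$ are $O(M\epsilon)$-suboptimal against the iterate-wise Lagrangian maximizers in \eqref{eq:theta_dynamics}.

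For feasibility, I would use only the componentwise inequality $\bbmu_{k+1} \geq \bbmu_k - \eta_{\bbmu} \bbg_k$ implied by the projection in \eqref{eq:mu_dynamics_augmented}, where $\bbg_k := \bbg\big( \tfrac{1}{T_0} \sum_{t=kT_0}^{(k+1)T_0-1} \bbf(\bbH_t, \bbp(\bbH_t, \bbmu_k; \bbphi^{\star})) \big)$; averaging over $k$ gives $\tfrac{1}{K} \sum_{k=0}^{K-1} \bbg_k \geq (\bbmu_0 - \bbmu_K)/(\eta_{\bbmu} K)$. Boundedness of $\{\bbmu_k\}$ then follows as in the proof of Theorem~\ref{thm:main}: the drift argument there needs only that the iterate-$k$ primal decision nearly maximize $\ccalL(\cdot, \bbmu_k)$ relative to the strictly-feasible witness $\hat{\bbtheta}$, which by the previous paragraph still holds up to the fixed $O(M\epsilon)$ term, so the boundedness constant merely enlarges. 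Hence $\bbmu_K / K \to \bbzero$, and applying Jensen's inequality with the concavity of $\bbg$ exactly as in Theorem~\ref{thm:main} yields \eqref{eq:thm_feasibility_state_augmented}; the $O(M\epsilon)$ slack does not survive the $T \to \infty$ limit, which is why the feasibility claim carries no $\epsilon$.

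For near-optimality, let $\bar{\bbf}^{\bbphi^{\star}}$ and $\bar{\bbf}^{\bbtheta^{\star}}$ denote the ergodic average performance vectors produced, respectively, by $\bbp(\bbH_t, \bbmu_{\lfloor t/T_0 \rfloor}; \bbphi^{\star})$ and by $\bbp(\bbH_t; \bbtheta^{\star}(\bbmu_{\lfloor t/T_0 \rfloor}))$ evaluated along the \emph{same} dual trajectory \eqref{eq:mu_dynamics_augmented}. The $M$-Lipschitz property of $\bbf$ together with the first paragraph gives $\E \| \bar{\bbf}^{\bbphi^{\star}} - \bar{\bbf}^{\bbtheta^{\star}} \|_{\infty} \leq M\epsilon$, so $\E[ \mathcal{U}( \bar{\bbf}^{\bbphi^{\star}} ) ] \geq \E[ \mathcal{U}( \bar{\bbf}^{\bbtheta^{\star}} ) ] - M\epsilon$ by unit-Lipschitzness of $\mathcal{U}$. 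Since the dual increments along this trajectory differ from those of a genuine Theorem~\ref{thm:main} trajectory only through the $O(M\epsilon)$ policy mismatch inside $\bbg_k$, the Lagrangian-telescoping estimate of Theorem~\ref{thm:main} applies and gives $\lim_{T\to\infty} \E[ \mathcal{U}( \bar{\bbf}^{\bbtheta^{\star}} ) ] \geq P^{\star} - c\eta_{\bbmu}G^2/2$ up to terms vanishing as $T\to\infty$. Combining the two estimates produces \eqref{eq:thm_optimality_state_augmented}.

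The hard part will be the opening step: passing from the \emph{expectation-over-$p_{\bbmu}$} guarantee of Definition~\ref{def:near_universality} to control of $\ccalL_{\bbmu_k}(\bbphi^{\star})$ along the \emph{realized} dual iterates $\{\bbmu_k\}$, which requires arguing that $p_{\bbmu}$ charges the bounded region actually visited by the iterates and that the near-universal approximation is effectively uniform there, and, in tandem, checking that the boundedness/drift argument of Theorem~\ref{thm:main} is robust to the additive $O(M\epsilon)$ perturbation it now suffers. Once these are handled, the rest is routine bookkeeping with the Lipschitz constants of $\mathcal{U}$ and $\bbf$ and with the ergodic ($T \to \infty$) limits --- essentially a re-run of the proof of Theorem~\ref{thm:main}.
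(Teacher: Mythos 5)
Your overall strategy matches the paper's: bound the gap between the state-augmented decisions and the iterative decisions of \eqref{eq:theta_dynamics}--\eqref{eq:mu_dynamics} via the unit-Lipschitzness of $\mathcal{U}$, the $M$-Lipschitzness of $\bbf$, and the near-universality degree $\epsilon$, then import the $P^{\star}-c\eta_{\bbmu}G^2/2$ guarantee from Theorem~\ref{thm:main}; feasibility is handled by re-running the telescoping/tightness argument of Appendix~A, which is exactly what the paper does (and omits for brevity). The paper's near-optimality proof is a single chain of inequalities ending in the bound $\E\left\|\bbp(\bbH,\bbmu;\bbphi^{\star})-\bbp(\bbH;\bbtheta(\bbmu))\right\|_{\infty}\leq\epsilon$ applied \emph{directly to $\bbphi^{\star}$}, after replacing the realized iterates $\bbmu_k$ by $\bbmu\sim p_{\bbmu}$ in step \eqref{eq:exp_mu}.

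There is, however, a non sequitur in your write-up. Your first paragraph correctly observes that Definition~\ref{def:near_universality} only furnishes \emph{some} $\bbphi_0$ approximating the map $\bbtheta^{\star}(\cdot)$, and what you transfer to $\bbphi^{\star}$ is only a bound on the scalar \emph{Lagrangian value}, $\E_{\bbmu\sim p_{\bbmu}}[\ccalL_{\bbmu}(\bbphi^{\star})]\geq\E_{\bbmu\sim p_{\bbmu}}[\max_{\bbtheta}\ccalL(\bbtheta,\bbmu)]-O(M\epsilon)$. But your third paragraph then asserts $\E\|\bar{\bbf}^{\bbphi^{\star}}-\bar{\bbf}^{\bbtheta^{\star}}\|_{\infty}\leq M\epsilon$, i.e., closeness of the performance \emph{vectors} under $\bbphi^{\star}$. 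Near-equality of a scalar Lagrangian value does not imply closeness of the decisions or of the performance vectors (two very different policies can achieve nearly equal values of $\ccalL_{\bbmu}$), and the $M\epsilon$ term in \eqref{eq:thm_optimality_state_augmented}, as well as the feasibility claim, rest precisely on that vector-level control. The paper sidesteps this by treating the decision-level $\epsilon$-bound as a property of $\bbphi^{\star}$ itself---in effect identifying the Lagrangian maximizer with the near-universal approximant---which is exactly the identification your more careful routing was trying to avoid and cannot recover from Lagrangian values alone. To close the gap you would need either (i) to prove both claims for the approximant $\bbphi_0$ and then separately argue that $\bbphi^{\star}$ does at least as well in utility (not merely in Lagrangian value), or (ii) to follow the paper and take the decision-level bound for $\bbphi^{\star}$ as the operative hypothesis. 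Your remaining concerns---that $p_{\bbmu}$ must charge the region actually visited by the dual iterates, and that the dual trajectory generated by \eqref{eq:mu_dynamics_augmented} differs from that of \eqref{eq:mu_dynamics}---are legitimate and are indeed glossed over in the paper's step \eqref{eq:exp_mu}; your sketch at least names them.
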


\begin{proof}
See Appendix~\ref{appx:proof_state_augmented}.
\end{proof}

\begin{algorithm*}[t]
\caption{Training Phase for the State-Augmented RRM Algorithm}
    \label{alg:training}
    \begin{algorithmic}[1]
    \STATE {\bfseries Input:} Number of training iterations $N$, batch size $B$, number of time steps $T$, primal learning rate $\eta_{\bbphi}$.
    \STATE Initialize: $\bbphi_0$.
    \FOR{$n=0, \ldots, N-1$}
        \FOR{$b=0, \ldots, B-1$}
            \STATE
            Randomly sample $\bbmu_b \sim p_{\bbmu}$.
            \STATE
            Randomly generate a sequence of network states $\{H_{b,t}\}_{t=0}^{T-1}$.
            \FOR{$t=0, \ldots, T-1$}
                \STATE 
                Generate RRM decisions $\bbp\left(\bbH_{b,t}, \bbmu_b; \bbphi_n\right)$.
            \ENDFOR
            \STATE
            Calculate the augmented Lagrangian according to~\eqref{eq:augmented_Lagrangian}, i.e., $$\ccalL_{\bbmu_b}(\bbphi_n) = \mathcal{U}\left( \frac{1}{T} \sum_{t=0}^{T-1} \bbf\left(\bbH_{b,t}, \bbp\left(\bbH_{b,t}, \bbmu_b; \bbphi_n\right)\right) \right)  + \bbmu^T \bbg\left( \frac{1}{T} \sum_{t=0}^{T-1}\bbf\left(\bbH_{b,t}, \bbp\left(\bbH_{b,t}, \bbmu_b; \bbphi_n\right)\right) \right).$$
        \ENDFOR
        \STATE
        Update the model parameters according to~\eqref{eq:phi_sga}, i.e., $$\bbphi_{n+1} = \bbphi_n +  \frac{\eta_{\bbphi}}{B} \sum_{b=0}^{B-1} \nabla_{\bbphi} \ccalL_{\bbmu_b}(\bbphi_n).$$
    \ENDFOR
    \STATE
    $\bbphi^{\star} \gets \bbphi_N$.
    \STATE
    {\bfseries Return:} {Optimal model parameters $\bbphi^{\star}$.}
    \end{algorithmic}
\end{algorithm*}

\begin{algorithm*}[t]
\caption{Execution Phase for the State-Augmented RRM Algorithm}
    \label{alg:execution}
    \begin{algorithmic}[1]
    \STATE {\bfseries Input:} Optimal model parameters $\bbphi^{\star}$, sequence of network states $\{H_{t}\}_{t=0}^{T-1}$, iteration length $T_0$, dual learning rate $\eta_{\bbmu}$.
    \STATE Initialize: $\bbmu_0 \gets \bbzero, k \gets 0$.
    \FOR{$t=0, \ldots, T-1$}
        \STATE 
        Generate RRM decisions $\bbp_t \coloneqq \bbp\left(\bbH_{t}, \bbmu_k; \bbphi^{\star}\right)$.
        \IF{$t+1 \mod T_0 = 0$}
        \STATE
        Update the dual variables according to~\eqref{eq:mu_dynamics_augmented}, i.e., $$\bbmu_{k+1} = \left[\bbmu_k - \eta_{\bbmu} \bbg\left( \frac{1}{T_0} \sum_{t=kT_0}^{(k+1)T_0-1} \bbf(\bbH_{t}, \bbp(\bbH_{t},\bbmu_k;\bbphi^{\star})) \right)\right]_+.$$
        \STATE
        $k \gets k+1.$
        \ENDIF
    \ENDFOR
    \STATE
    {\bfseries Return:} {Sequence of RRM decisions $\{\bbp_t\}_{t=0}^{T-1}$.}
    \end{algorithmic}
\end{algorithm*}


\subsection{Practical Considerations}

In order to resolve the maximization in~\eqref{eq:phi_dynamics_augmented} during the offline training phase, we use a gradient ascent-based approach to learn an optimal set of parameters $\bbphi^{\star}$, which can be frozen after training is complete and utilized during execution. In particular, during the training phase, for a batch of dual variables $\{\bbmu_b\}_{b=1}^B$, randomly sampled from the distribution $p_{\bbmu}$, we consider the \emph{empirical} version of the Lagrangian maximization problem in~\eqref{eq:phi_dynamics_augmented}, i.e.,
\begin{align}
\bbphi^{\star} &= \arg \max_{\bbphi \in \bbPhi} \frac{1}{B} \sum_{b=1}^B \ccalL_{\bbmu_b}(\bbphi), \label{eq:phi_training_in_practice}
\end{align}
which we iteratively solve using gradient ascent. More specifically, we randomly initialize the model parameters as $\bbphi_0$, and then update them over an iteration index $n\in\{0,1,2,\dots,N-1\}$ as
\begin{align}
\bbphi_{n+1} &= \bbphi_n +  \frac{\eta_{\bbphi}}{B} \sum_{b=0}^{B-1} \nabla_{\bbphi} \ccalL_{\bbmu_b}(\bbphi_n),\label{eq:phi_sga}
\end{align}
where $\eta_{\bbphi}$ denotes the learning rate corresponding to the model parameters $\bbphi$. With a slight abuse of notation, we denote the final set of model parameters (i.e., the outcome of~\eqref{eq:phi_sga} upon convergence) as $\bbphi^{\star}$. Note that to enhance the generalization capability of the trained model, for each set of dual variables $\bbmu_b, b\in\{0,1,2,\dots,B\}$, in the batch, we can also randomly sample a separate realization of the sequence of network states, i.e., $\{H_{b,t}\}_{t=0}^{T-1}$, from the underlying random process, allowing the model to be optimized over a \emph{family} of network realizations. The training procedure is summarized in Algorithm~\ref{alg:training}.




During execution, the dual dynamics are used to update the dual variables and generate the RRM decisions as follows: We initialize the dual variables as $\bbmu_0 = \bbzero$. For any time step $t\in\{0,1,2,\dots,T-1\}$, given the network state $\bbH_t$, we generate the RRM decisions using the state-augmented RRM policy $\bbp(\bbH_t, \bbmu_{\lfloor t / T_0 \rfloor};\bbphi^{\star})$. Then, we update the dual variables every $T_0$ time steps as in~\eqref{eq:mu_dynamics_augmented}.
Note how the dual dynamics in~\eqref{eq:mu_dynamics_augmented} track the satisfaction of the original constraints in~\eqref{eq:min_rate_constraint_param}. In particular, if the RRM decisions at time step $t$ help satisfy the constraints, the dual variables are reduced. On the other hand, if the constraints are not satisfied at a given time step, the dual variables increase in value. The state-augmented execution procedure is summarized in Algorithm~\ref{alg:execution}.


\begin{remark}
Note that during the training procedure, the choice of distribution $p_{\bbmu}$ for sampling the dual variables, alongside the fact that they are kept fixed for the entire $T$ time steps, can affect the optimal model parameters $\bbphi^{\star}$. As we will show in Section~\ref{sec:exp_results}, a uniform distribution provides a desirable performance in our experiments. However, according to the dual dynamics in~\eqref{eq:mu_dynamics_augmented}, sampling the dual variables from the dual descent trajectory might lead to superior performance. Such a distribution may also be dependent on the realization of the initial network state, i.e., $\bbH_0$. For example, in an interference channel, the dual variables are expected to be higher on average for users in poor channel conditions (i.e., those with low signal-to-noise ratio (SNR) or large incoming interference-to-noise ratio (INR) values) as compared to users in favorable channel conditions (i.e., those with high SNR or low incoming INR values). The problems of finding the best distribution for sampling dual variables during training, and whether and how to update them during the $T$ training time steps for a given network realization, are interesting research directions, which we leave as future work.
\end{remark}

\begin{remark}
The RRM decisions $\bbp\left(\bbH_t, \bbmu_{\lfloor t/T_0 \rfloor}; \bbphi^{\star}\right)$ generated by our proposed algorithm might be suboptimal before the convergence of the dual variables. This implies that it might be beneficial to either i) have a number of warm-up time steps at the beginning of the execution phase until optimal RRM decisions are generated, or ii) initialize the dual variables with their optimal levels, which can be possible for network realizations seen during training, but does not necessarily transfer across different network realizations.
\end{remark}

\pagebreak

\section{Application to Power Control with \\ Graph Neural Network Parameterizations}\label{sec:power_control}
As an important application of the RRM problem to showcase the capabilities of the proposed state-augmented learning algorithm, we consider the problem of power control in $m$-user interference channels. More precisely, we study networks comprising $m$ \emph{users}, i.e., transmitter-receiver pairs $\{(\mathsf{Tx}_i, \mathsf{Rx}_i)\}_{i=1}^m$, where each transmitter intends to communicate to its corresponding receiver, while causing interference at other receivers. In this setting, the network state at time step $t$, i.e., $\bbH_t\in\mathbb{C}^{m \times m}$, contains all channel gains in the network, where the channel gain between transmitter $\mathsf{Tx}_i$ and receiver $\mathsf{Rx}_j$ at time step $t$ is denoted by by $h_{ij, t}\in\mathbb{C}$. The RRM decisions $\bbp \in [0, P_{\max}]^m$ represent the transmit power levels of the transmitters, with $P_{\max}$ denoting the maximum transmit power. Moreover, the performance function $\bbf(\bbH_t, \bbp)\in\reals^m$ represents the receiver rates, where for each receiver $\mathsf{Rx}_i$, the rate at time step $t$ is given by
\begin{align}
f_i(\bbH_t, \bbp) = \log_2\left(1+\frac{p_i \left|h_{ii,t}\right|^2}{N + \sum_{j=1, j\neq i}^m p_j \left|h_{ji,t}\right|^2}\right),
\end{align}
where $N$ denotes the noise variance, and it is assumed that the receiver treats all incoming interference as noise~\cite{annapureddy2009gaussian,geng2015optimality}.

We use a sum-rate utility $\ccalU(\bbx)=\sum_{i=1}^m x_i$, and consider per-user minimum-rate requirements as the constraints. In particular, we consider $c=m$ constraints, where the $i$\textsuperscript{th} constraint is given by $g_i(\bbx) = x_i - f_{\min}$, with $f_{\min}$ denoting the minimum per-user rate requirement. Therefore, letting $\bbone_m$ denote an $m$-dimensional vector of $1$'s, the optimization problem in~\eqref{eq:nonparam_problem} can be rewritten as
\begin{subequations}\label{eq:nonparam_problem_power_control}
\begin{alignat}{2}
    &\max_{\{\bbp(\bbH_t)\}_{t=0}^{T-1}} &~~& \frac{1}{T} \sum_{t=0}^{T-1} \sum_{i=1}^m f_i(\bbH_t, \bbp(\bbH_t)) ,\label{eq:objective_non_param_power_onctrol}             \\
    &~~~~~~\text{s.t.} &&   \frac{1}{T} \sum_{t=0}^{T-1} \bbf(\bbH_t, \bbp(\bbH_t))  \geq f_{\min} \bbone_m,\label{eq:min_rate_constraint}\\
    &~~~~~~ &&  \bbp(\bbPi^T \bbH \bbPi) = \bbPi^T \bbp(\bbH), \forall \bbH\in\ccalH, \forall \bbPi\in\ccalS_m.\label{eq:perm_eq_constraint}%
\end{alignat}
\end{subequations}
Note that in~\eqref{eq:perm_eq_constraint}, we have explicitly added a constraint on the \emph{permutation equivariance} of the power control policy, with $\ccalS_m$ denoting the set of all $m \times m$ permutation matrices, i.e.,
\begin{align}
\ccalS_m \coloneqq \left\{ \bbPi \in \{0,1\}^{m \times m} : \bbPi \bbone_m = \bbPi^T \bbone_m = \bbone_m \right\}.
\end{align}
This motivates the use of graph neural network (GNN) parameterizations, since they satisfy the permutation equivariance architecture by definition. For the problem of processing signals on graphs, certain GNN architectures have been shown to satisfy the near-universality assumption for a class of continuous, equivariant functions~\cite{keriven2019universal, azizian2021expressive, keriven2021universality}. It follows that with the restriction in~\eqref{eq:perm_eq_constraint}, the hypotheses of Theorem~\ref{thm:near_universality_result} are satisfied, since the set of permutation-equivariant functions is convex.

\subsection{GNN-Based Parameterizations}

In order to parameterize the state-augmented RRM policy, as in prior work~\cite{shen2020graph, eisen2020optimal, lee2020graph, naderializadeh2022learning}, we use GNN architectures, which have been shown to provide several benefits, such as scalability and transferability, alongside their inherent permutation equivariance. We operate the GNN over a graph-structured format of the interference channel, defined as a graph $\ccalG_t=(\ccalV, \ccalE, \bby_t, w_t)$ at each time step $t$, where:
\begin{itemize}
\item $\ccalV=\{1,2,\dots,m\}$ denotes the set of graph nodes, with each node representing a user (i.e., transmitter-receiver pair) in the graph,
\item $\ccalE = \ccalV \times \ccalV$ denotes the set of directed graph edges,
\item $\bbY_t \in \reals^{m \times 1}$ denotes the initial node features, which we set to the dual variables associated to the users at each time step, i.e., $\bbY_t \coloneqq \bbmu_{\lfloor t / T_0 \rfloor}$, and
\item $w_t: \ccalE \to \reals$ denotes the function that maps each edge to its weight at time $t$, which we define as $w_t(i, j) \coloneqq \frac{1}{Z_t} \log\left(P_{\max}|h_{ij, t}|^2 / N\right)$, with $Z_t > 0$ representing a normalization factor.
\end{itemize}

Note how the aforementioned graph structure allows the implementation of the state-augmented architecture, where the GNN can take the dual variables corresponding to all the users as the input node features, and the network state as the input edge features. More precisely, we consider $L$ consecutive GNN message-passing layers, where the $l$\textsuperscript{th} layer, $l\in\{1,2,\dots,L\}$ transforms the node features at layer $l-1$, i.e., $\bbY_t^{l-1}\in\reals^{m \times F_{l-1}}$, to the node features at layer $l$, i.e., $\bbY_t^{l}\in\reals^{m \times F_{l}}$. In its most general form, for each node $v\in\ccalV$, the aforementioned update can be written as
\begin{align}
\bby_{v,t}^{l} = \bbPsi^l\left( \left\{\bby_{u,t}^{l-1}, w_t(u,v) \right\}_{u\in\ccalV: (u,v)\in\ccalE}; \bbphi^l\right),
\end{align}
where $\bbPsi^l(\cdot; \bbphi^l): \reals^{F_{l-1}} \to \reals^{F_{l}}$ denotes the message-passing GNN operator, parameterized by the set of parameters $\bbphi^l$. We use $\bbY_t^0 = \bbY_t$ (with $F_0=1$) as the initial node features. We also set $F_{L}=1$ to produce a scalar output feature per node, and we define the power control decisions as
\begin{align}\label{eq:final_power_levels_GNN}
\bbp\left(\bbH_t, \bbmu_{\lfloor t / T_0 \rfloor};\bbphi\right) \coloneqq P_{\max} \cdot \sigma\left( \bbY_t^L \right),
\end{align}
where $\bbphi \coloneqq \left\{\bbphi^l\right\}_{l=1}^L$, and $\sigma(\cdot)$ denotes element-wise sigmoid function, with $\sigma(x)=\frac{1}{1+\exp(-x)}, \forall x\in\reals$, which is used to respect the instantaneous power level constraints $\bbp \in [0, P_{\max}]^m$.

\begin{figure*}[h]
\centering
\includegraphics[width=\textwidth]{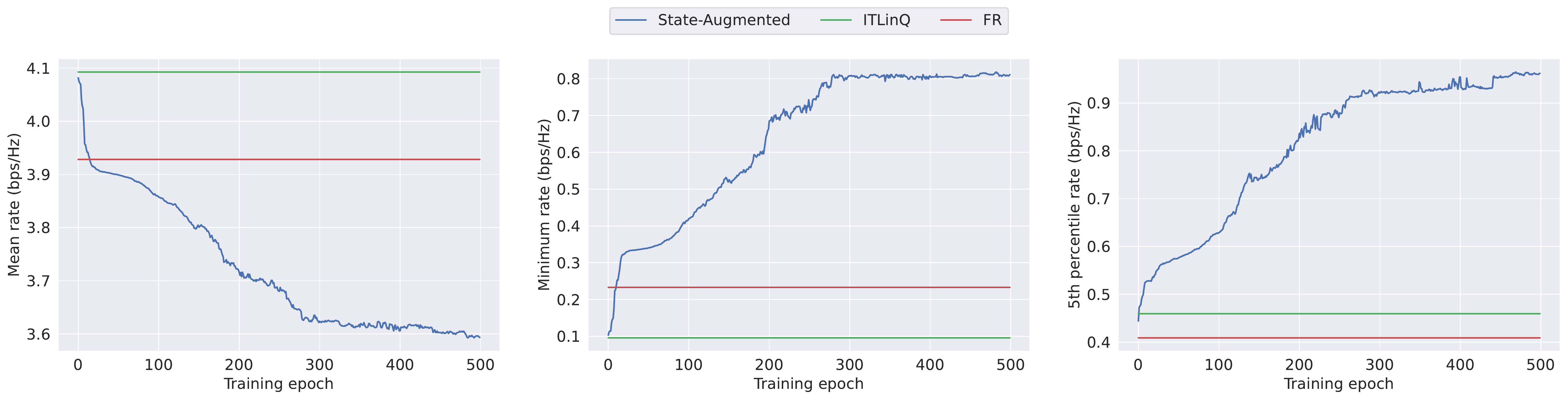}
\caption{Convergence behavior of the proposed state-augmented RRM algorithm, and its comparison with the baseline methods for a \emph{single} network realization with $m=50$ transmitter-receiver pairs (variable-density scenario). Note that the baseline algorithms fail to achieve the minimum-rate requirement of $f_{\min}=0.6$, while our proposed method is able to satisfy the constraints for all $50$ users.
}
\label{fig:convergence_singleConfig_m50_vardensity}
\end{figure*}

\begin{figure*}[h]
\centering
\includegraphics[width=\textwidth]{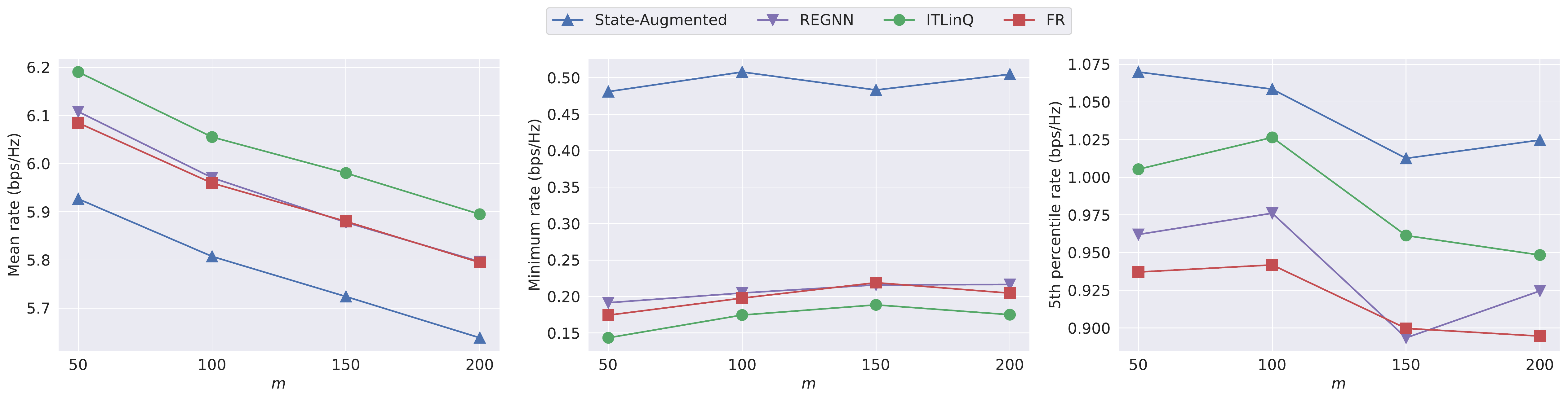}
\caption{Performance comparison of the proposed state-augmented RRM algorithm against baseline methods in the fixed-density scenario for networks with $m\in\{50,100,150,200\}$ transmitter-receiver pairs.}
\label{fig:scalability_fixed_density}
\end{figure*}

\subsection{Experimental Results}\label{sec:exp_results}
We generate networks with $m$ transmitter-receiver pairs, located randomly within a square network area of side length $R$. We drop the transmitters uniformly at random within the network area, while ensuring a minimum distance of $75$m between each pair of them. Afterwards, for each transmitter, we drop its associated receiver uniformly at random within an annulus around the transmitter, with inner and outer radii of $10$m and $50$m, respectively. To control the network density, we consider the following two scenarios to determine the network area size:
\begin{itemize}
    \item \textbf{Fixed Density:} We set $R = \sqrt{m / 20} \times 2$km to keep the density constant (at $5$ users/km$^2$).
    \item \textbf{Variable Density:} We set $R=2$km, implying that the network density increases with $m$.
\end{itemize}
In both cases, and for all values of $m$, we set $f_{\min} = 0.6$bps/Hz, $T_0=5$, and $T=100$. We consider both large-scale and small-scale fading for the channel model. The large-scale fading follows a dual-slope path-loss model similar to~\cite{zhang2015downlink,andrews2016we,naderializadeh2022learning} alongside a $7$dB log-normal shadowing. Moreover, the small-scale fading models channel variations across different time steps following a Rayleigh distribution with a pedestrian speed of $1$m/s~\cite{li2002simulation}. We set the maximum transmit power to $P_{\max}=10$dBm, and the noise variance to $N=-104$dBm (due to the $10$MHz bandwidth and noise power spectral density of $-174$dBm/Hz).

We use a $3$-layer GNN with $F_1=F_2=64$, where the first two layers are based on the local extremum operator proposed in~\cite{ranjan2020asap}, while the last layer entails a linear projection (together with the mapping in~\eqref{eq:final_power_levels_GNN}). We set the primal and dual learning rates to $\eta_{\bbphi} = 10^{-1} / m$ and $\eta_{\bbmu} = 20$, respectively, and we set the batch size to $B=128$. For each value of $m$, we generate a total of $256$ training samples and $128$ samples for evaluation, where each sample refers to a realization of the transmitter/receiver locations (and the large-scale fading), alongside the small-scale fading random process. We set the normalization factor for edge weights at time step $t$ to $Z_t = \left\|\log\left(P_{\max}|\bbH_t|^2 / N\right)\right\|_2$. Except for the case of Section~\ref{sec:single_network} below, we run training for $100$ epochs, and we draw the dual variables during training randomly from the $U(0,1)$ distribution.

\urlstyle{tt}
As baselines, we compare the performance of our proposed method against three baselines: i) Full reuse (FR), where every transmitter uses $P_{\max}$; ii) Information-theoretic link scheduling (ITLinQ)~\cite{naderializadeh2014itlinq}, and iii) random-edge graph neural networks (REGNN)~\cite{eisen2020optimal}.\footnote{For a fair comparison with~\cite{eisen2020optimal}, we use the same exact GNN structure and hyperparameters as for our method to implement the REGNN method.} We would like to point out that, while the REGNN baseline allows the consideration of minimum-rate constraints, the first two baselines (i.e., FR and ITLinQ) are not able to handle such per-user requirements. In what follows, we primarily report the results in terms of three separate metrics, namely the mean rate, minimum rate (discarding the bottom 1\% of user rates as outliers), and the 5\textsuperscript{th} percentile rate, evaluated over the test configurations.\footnote{Our code is available at \url{https://github.com/navid-naderi/StateAugmented_RRM_GNN}.}

\begin{figure*}[h]
\centering
\includegraphics[width=\textwidth]{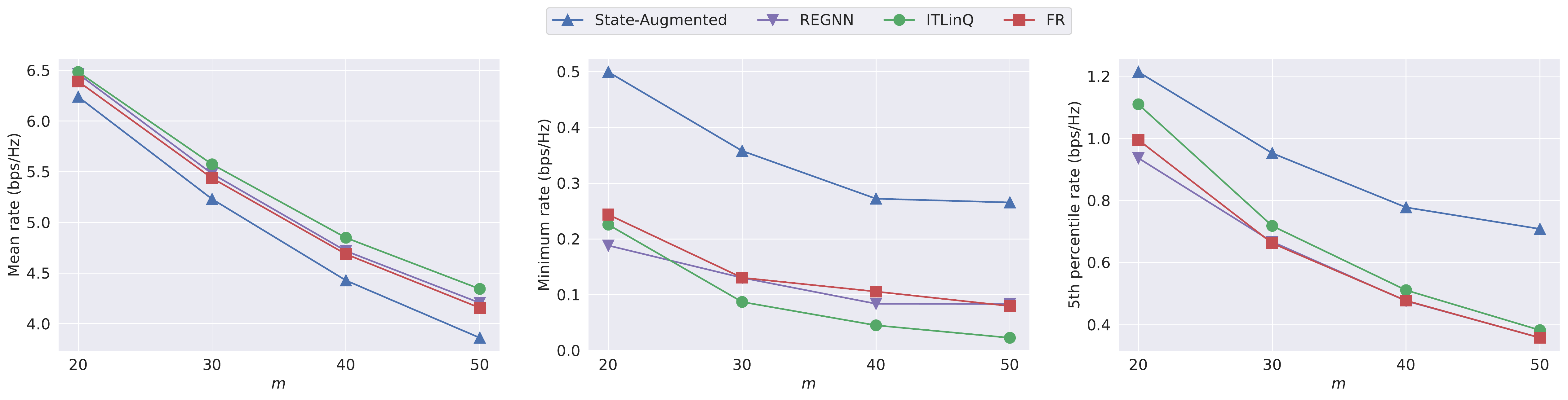}
\caption{Performance comparison of the proposed state-augmented RRM algorithm against baseline methods in the variable-density scenario for networks with $m\in\{20, 30, 40, 50\}$ transmitter-receiver pairs.}
\label{fig:scalability_variable_density}
\end{figure*}

\begin{figure*}[t!]
    \centering
    \begin{subfigure}[t]{0.5\textwidth}
        \centering
        \includegraphics[width=.95\textwidth]{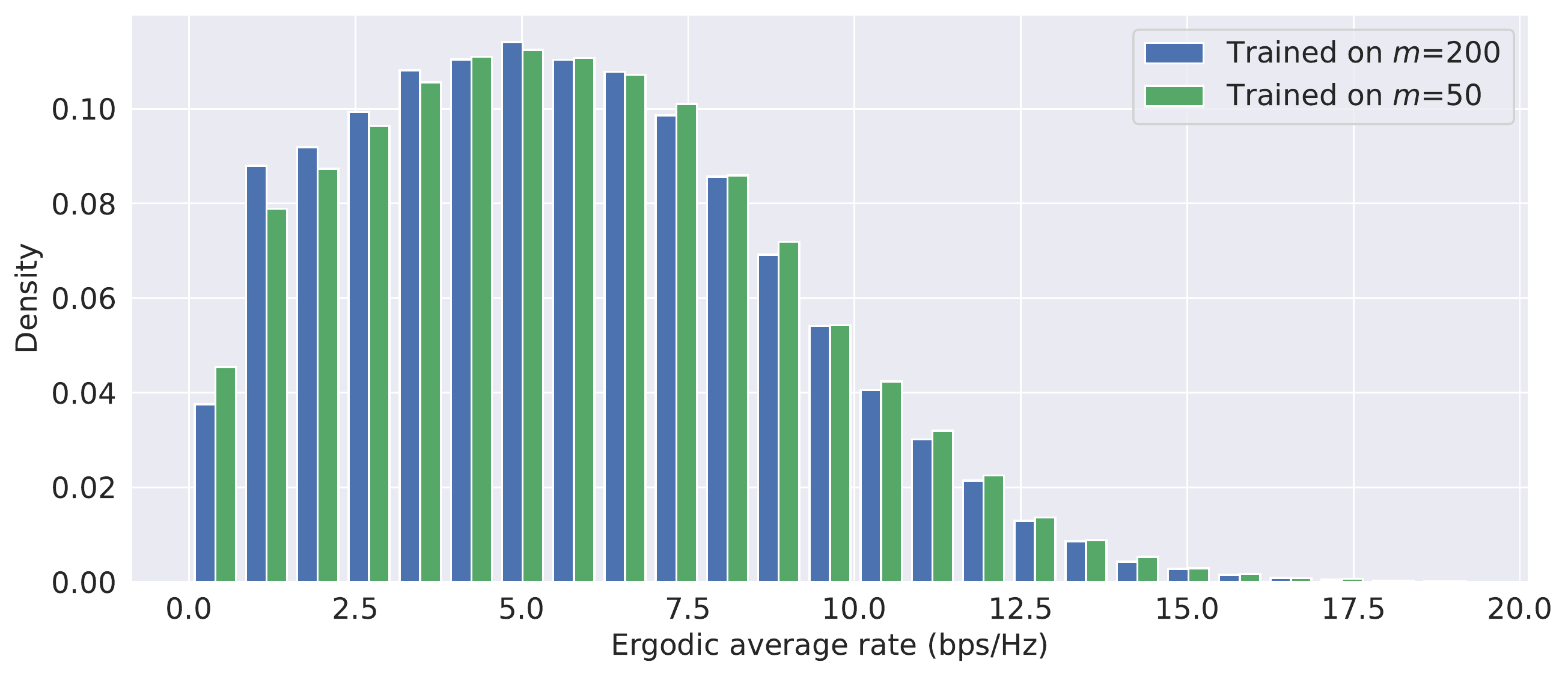}
        \caption{\hspace*{-.25in}}
        \label{fig:transfer_fixed_density_50To200}
    \end{subfigure}%
    \hfill
    \begin{subfigure}[t]{0.5\textwidth}
        \centering
        \includegraphics[width=.95\textwidth]{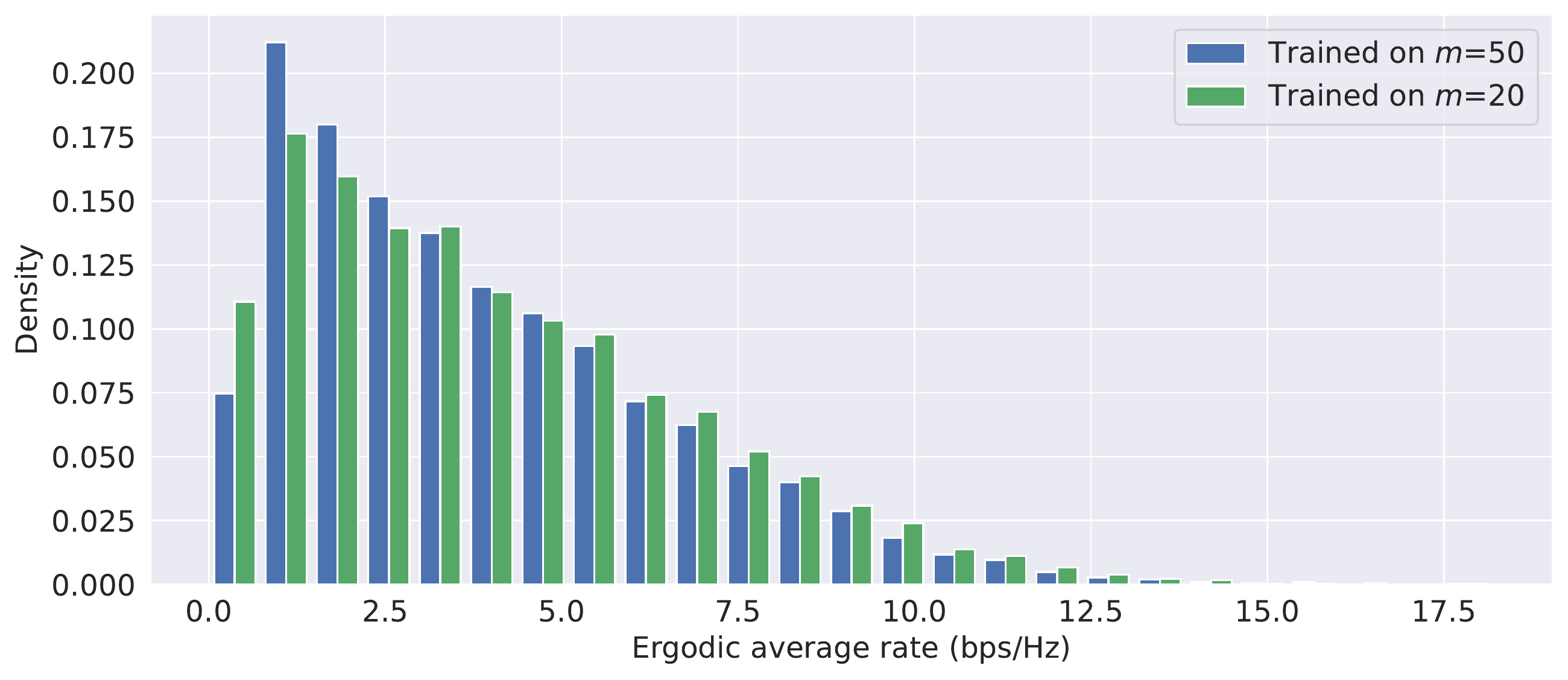}
        \caption{\hspace*{-.25in}}
        \label{fig:transfer_variable_density_20To50}
    \end{subfigure}%
    \caption{Transferability of the proposed state-augmented RRM procedure, shown as the histogram of ergodic average rates, for the (a) fixed-density scenario, evaluated on samples with $m=200$, and (b) variable-density scenario, evaluated on samples with $m=50$.}
    \label{fig:transferability}
\end{figure*}

\begin{figure*}[h]
\centering
\includegraphics[width=\textwidth]{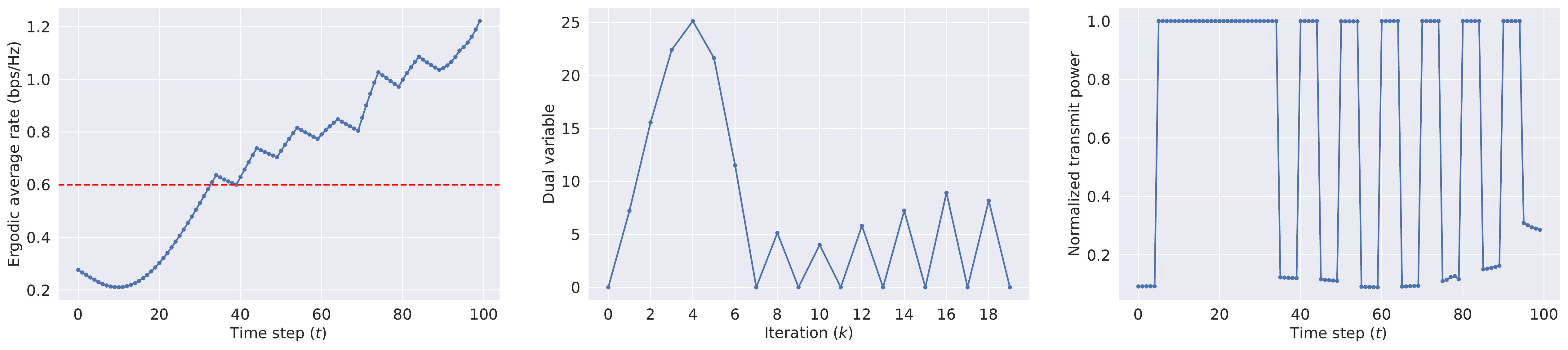}
\caption{Evolution of the ergodic average rate (left), dual variable (middle), and normalized transmit power level (right) for an example user in a network with $m=50$ transmitter-receiver pairs (for the variable-density scenario). The dashed line in the left plot represents the minimum-rate requirement, i.e., $f_{\min}=0.6$.}
\label{fig:policy_switching_m50_vardensity}
\end{figure*}

\begin{figure}[h]
\centering
\includegraphics[width=.95\columnwidth]{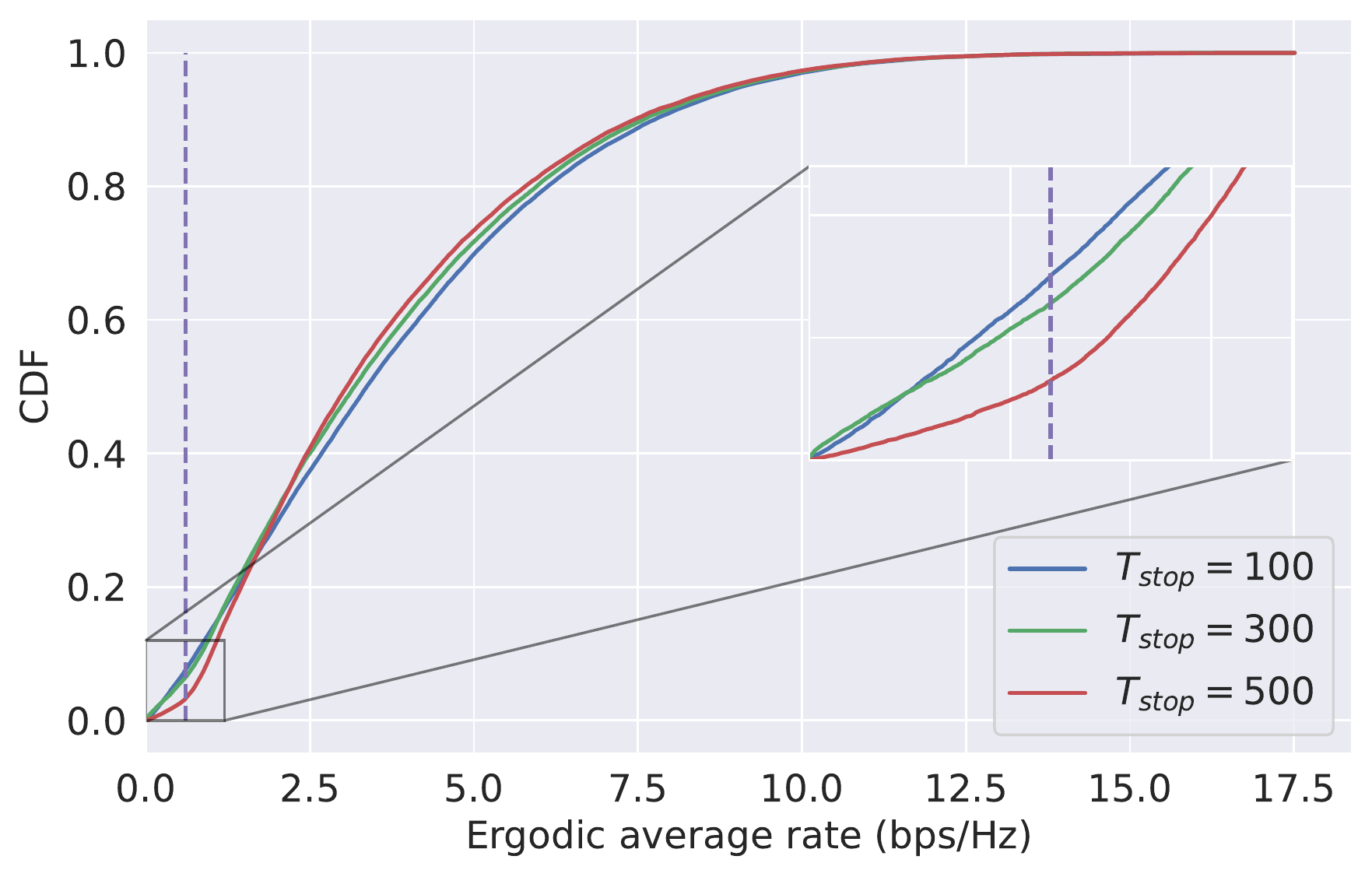}
\caption{Impact of the early stopping of the dual descent updates on users' ergodic average rates in networks with $m=50$ transmitter-receiver pairs (for the variable-density scenario). The dashed line represents the minimum-rate requirement, i.e., $f_{\min}=0.6$.}
\label{fig:early_stopping_m50_vardensity_T500}
\end{figure}


\subsubsection{Single Network Realization}\label{sec:single_network}
Figure~\ref{fig:convergence_singleConfig_m50_vardensity} shows the performance of our proposed method against the baselines for a \emph{single} network realization with $m=50$ users (in the variable-density scenario). For this specific experiment, we continue training for $500$ epochs, and decay the primal learning rate by $0.5$ every $100$ epochs. At the end of each training epoch, we evaluate the RRM algorithm on the same realization that is used for training. As the figure shows, our proposed state-augmented RRM algorithm considerably gains over the baseline algorithms in terms of the minimum and 5\textsuperscript{th} percentile rates, managing to satisfy the minimum-rate requirements for all users at the expense of a smaller achieved mean rate.

\subsubsection{Scalability}
Here, we assess the performance of the the proposed algorithm in \emph{families} of networks with different numbers of users, where the policies trained on samples with a specific value of $m$ are evaluated on test samples with the same value of $m$. Figures~\ref{fig:scalability_fixed_density} and~\ref{fig:scalability_variable_density} compare the performance of our proposed method with the baseline algorithms in the fixed-density and variable-density scenarios, respectively. In both scenarios, thanks to the feasibility guarantees of our proposed method for the per-user minimum-rate constraints, our method significantly outperforms the baseline methods in terms of the minimum rate and the 5\textsuperscript{th} percentile rate. This, however, comes at the cost of a slightly lower mean rate. Note that such a scalability is due to the size-invariant property of GNN-based parameterizations (where the number of parameters is fixed regardless of the graph size), as opposed to other parameterizations, such as multi-layer perceptrons (i.e., fully-connected neural networks).

\subsubsection{Transferability to Unseen Network Sizes}
Another upside of the size-invariance property of GNNs is that they can be evaluated on graph sizes not encountered throughout the training process. Figure~\ref{fig:transfer_fixed_density_50To200} shows the histogram of user ergodic average rates under our proposed algorithm in test networks of size $m=200$ (fixed-density scenario) using two GNNs: one trained on samples with $m=200$ and the other trained on samples with $m=50$. Moreover, Figure~\ref{fig:transfer_variable_density_20To50} shows a similar histogram for the variable-density scenario, where GNNs trained on samples with $m=50$ and $m=20$ are evaluated on test samples with $m=50$. As the figures show, in both cases, the trained GNNs provide excellent transferability to larger network sizes, achieving user rates that are similar to the GNNs without the train/test mismatch. Moreover, as expected, the transferred performance is more desirable in the fixed-density scenario than the variable-density scenario, as the channel statistics mostly remain similar in the former case, while in the latter case, the network becomes more interference-limited as $m$ grows.

\subsubsection{Policy Switching}
Figure~\ref{fig:policy_switching_m50_vardensity} shows what an example receiver experiences over the course of the $T=100$ time steps in a test network with $m=50$ transmitter-receiver pairs (for the variable-density scenario) once training is completed. Letting $i$ denote the receiver index, at each time step $t$, we plot its ergodic average rate up to that time step (i.e., $\frac{1}{t} \sum_{\tau=1}^t f_i(\bbH_{\tau}, \bbp(\bbH_{\tau},\bbmu_{\lfloor \tau / T_0 \rfloor};\bbphi^{\star}))$), its dual variable at the corresponding iteration (i.e., $\mu_{i,\lfloor t / T_0 \rfloor}$), and the selected normalized transmit power of transmitter $i$ at that time step (i.e., $p_i(\bbH_t,\bbmu_{\lfloor t / T_0 \rfloor};\bbphi^{\star}) / P_{\max}$). As the figure shows, a \emph{policy switching} behavior occurs, where for the time steps in which the receiver's ergodic average rate is below $f_{\min}$, the dual variable increases, leading to the corresponding transmitter using full transmit power. Moreover, when the ergodic average rate exceeds $f_{\min}$, the transmitter uses an on-off pattern to maintain its receiver's high ergodic average rate, while minimizing the interference caused at other receivers. This shows why state-augmentation is crucial for the algorithm to be able to switch the policy if/when necessary, depending on the constraint violations over time. Indeed, neither the ``off'' policy ($p_i=0$) nor the ``on'' policy ($p_i=P_{\max}$) guarantees the satisfaction of the constraints by itself. However, the illustrated policy switching behavior in Figure~\ref{fig:policy_switching_m50_vardensity} is exactly what the transmitters need to show in order to be able to satisfy their constraints in the long run.

\subsubsection{Drawback of Early Stopping of Dual Descent Iterations}
As we mentioned in Section~\ref{sec:alg}, the feasibility and near-optimality of the RRM decisions depend on the fact that the dual descent iterations in~\eqref{eq:mu_dynamics_augmented} continue for the entire duration of the execution phase. This implies that such feasibility and near-optimality guarantees will not hold if the dual descent updates are stopped at a small, finite number of iterations. Figure~\ref{fig:early_stopping_m50_vardensity_T500} illustrates the empirical cumulative distribution function (CDF) of the user ergodic average rates in test networks with $m=50$ transmitter-receiver pairs (for the variable-density scenario), where the dual descent updates are stopped at a certain time step $T_{\text{stop}} \in \{100, 300, 500\}$. For the sake of this experiment, we expanded the execution time horizon to $T=500$ time steps. As the figure shows, early stopping of the dual variable updates leads to a larger fraction of users violating their minimum-rate requirements. Quite interestingly, such an early stopping provides an equivalent way of implementing the regular primal-dual REGNN method in~\cite{eisen2020optimal}.

\subsubsection{Inference Computational Complexity}
The size-invariance property of the GNNs implies that the number of GNN parameters does not depend on the underlying graph size on which the GNN is operating. However, using GNNs on larger graph sizes naturally leads to a higher computational complexity. Figure~\ref{fig:inference_time_vs_m} shows the average inference time (on CPU) for graphs corresponding to networks with $m\in[20, 200]$ transmitter-receiver pairs. Note that, in practice, the inference time can effectively be reduced by parallelizing computations on GPU hardware architectures.

\begin{figure}[t!]
\centering
\includegraphics[width=.97\columnwidth]{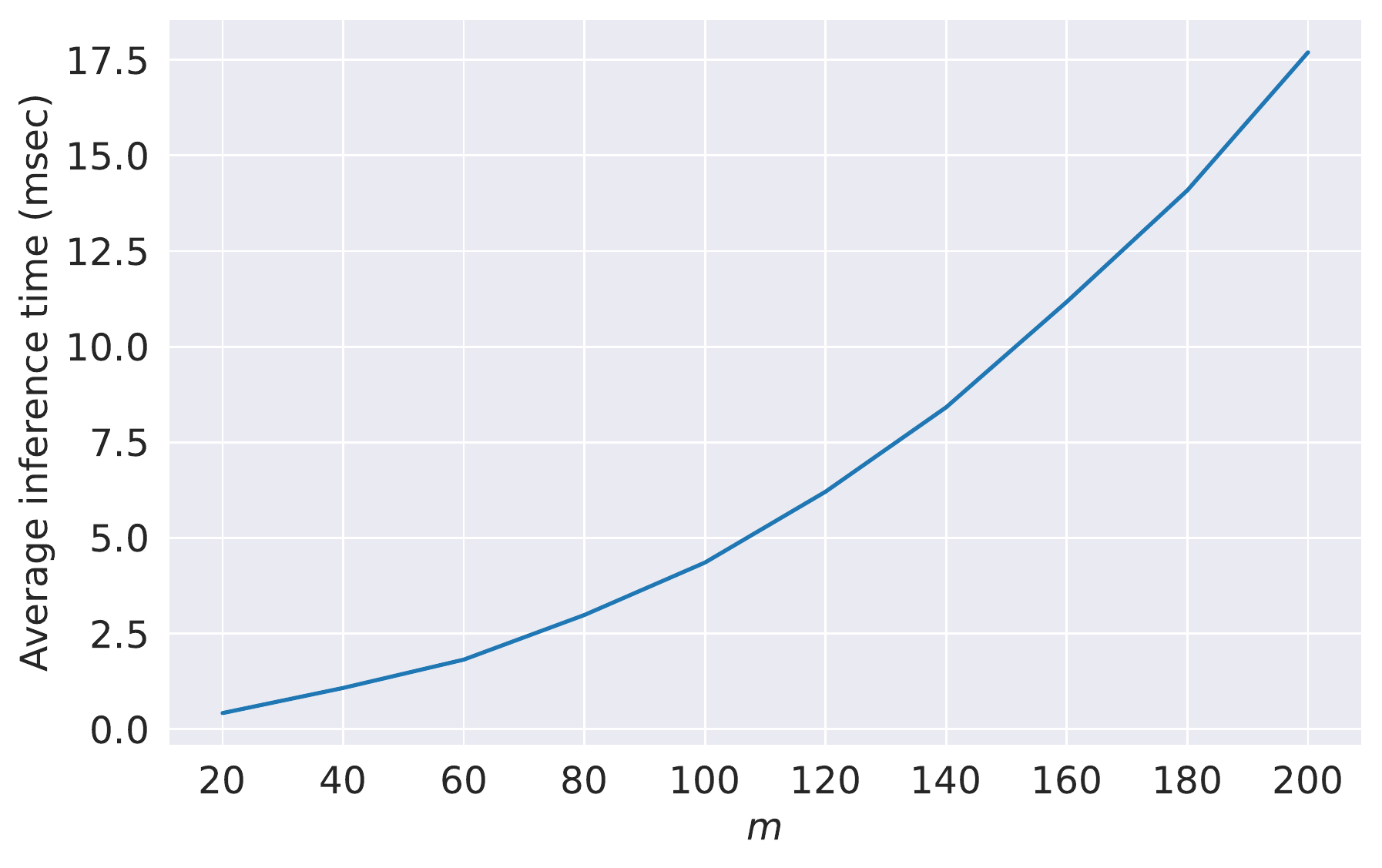}
\caption{Average inference time (on a CPU hardware architecture) for networks with $m\in[20,200]$ transmitter-receiver pairs.}
\label{fig:inference_time_vs_m}
\end{figure}


\section{Concluding Remarks}\label{sec:conclusion}
We considered the problem of resource allocation in wireless network, termed radio resource management (RRM), in which the goal is to maximize a network-wide utility subject to constraints on the long-term average performance of the network users. We showed how parameterized dual-based RRM policies can lead to feasible and near-optimal solutions, but suffer from multiple challenges, including that they need to be run for an infinite number of time steps and the model parameters have to be optimized for any given set of dual variables at every time step. To mitigate these challenges, we proposed a state-augmented RRM algorithm, which revises the parameterization to take as input the network state, as well as the dual variable at each step, and produce as output the RRM decisions. We showed that for near-universal parameterizations, the proposed method also leads to feasible and near-optimal sequences of RRM decisions. Using graph neural network (GNN) architectures to parameterize the state-augmented RRM policy, we also demonstrated, via a set of numerical experiments, that the proposed method provides scalable and transferable wireless power control solutions that outperform baseline algorithms.

\appendices
\section{Proof of Theorem~\ref{thm:main}}\label{appx:proof}
\allowdisplaybreaks

The arguments used in the proof of Theorem~\ref{thm:main} follow similar steps to those in~\cite{calvo2021state}, with some minor differences specific to our problem formulation, as outlined in Remark~\ref{remark:proof_diff}. We, therefore, include the complete proof here for the paper to be self-complete.

\subsection{Tightness of The Sequence of Dual Variable Probability Measures}\label{appx:proof_tightness}
We start by proving that the sequence of dual variable probability measures, i.e., $\{p(\bbmu_k|\bbmu_0)\}_{k}$, is tight in the following lemma.

\begin{lemma}
For any $\delta > 0$, there exists a compact set $\ccalA_{\delta}$ such that for every $k \geq 0$, we have $\mathsf{Pr}[\bbmu_k \in \ccalA_{\delta}] > 1 - \delta$.
\end{lemma}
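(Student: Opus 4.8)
The plan is to reduce the tightness claim to a uniform-in-$k$ moment bound on the dual iterates and then apply Markov's inequality. Concretely, it is enough to produce a finite constant $M_0$ (depending only on $\bbmu_0$ and the problem parameters) with $\E\big[\|\bbmu_k\| \,\big|\, \bbmu_0\big] \le M_0$ for all $k \ge 0$: since $\bbmu_k \in \reals_+^c$ for every $k$, the set $\ccalA_\delta := \{\bbmu \in \reals_+^c : \|\bbmu\| \le 2M_0/\delta\}$ is closed and bounded, hence compact, and Markov's inequality gives $\Pr{\bbmu_k \notin \ccalA_\delta \mid \bbmu_0} \le \delta/2 < \delta$ uniformly in $k$. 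So the whole lemma comes down to a stochastic (Foster--Lyapunov) drift analysis of the recursion~\eqref{eq:mu_dynamics}.

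For the drift, write $\hat{\bbg}_k := \bbg\big(\tfrac{1}{T_0}\sum_{t=kT_0}^{(k+1)T_0-1}\bbf(\bbH_t,\bbp(\bbH_t;\bbtheta_k))\big)$, so $\bbmu_{k+1} = [\bbmu_k - \eta_{\bbmu}\hat{\bbg}_k]_+$. Because $[\cdot]_+$ is the Euclidean projection onto $\reals_+^c$, a convex set containing $\bbzero$, it is non-expansive and fixes $\bbzero$, so
\begin{align*}
\|\bbmu_{k+1}\|^2 \le \|\bbmu_k - \eta_{\bbmu}\hat{\bbg}_k\|^2 = \|\bbmu_k\|^2 - 2\eta_{\bbmu}\,\bbmu_k^{\top}\hat{\bbg}_k + \eta_{\bbmu}^2\|\hat{\bbg}_k\|^2 .
\end{align*}
The first hypothesis of Theorem~\ref{thm:main} bounds the last term by $\eta_{\bbmu}^2 c G^2$. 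For the cross term I would invoke the optimality of $\bbtheta_k$ in~\eqref{eq:theta_dynamics}: comparing the windowed Lagrangian at $\bbtheta_k$ against its value at the strictly-feasible $\hat{\bbtheta}$, and using that the attainable ergodic-average performance lies in a bounded set (so $\mathcal{U}$ varies by at most a constant $2B_{\mathcal{U}}$ there), gives $\bbmu_k^{\top}\hat{\bbg}_k \ge \bbmu_k^{\top}\,\bbg\big(\tfrac{1}{T_0}\sum_{t=kT_0}^{(k+1)T_0-1}\bbf(\bbH_t,\bbp(\bbH_t;\hat{\bbtheta}))\big) - 2B_{\mathcal{U}}$. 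Taking the conditional expectation over the channel realizations inside the window and invoking the second hypothesis of Theorem~\ref{thm:main} (with the state process stationary, so that a length-$T_0$ window carries the same law as the full horizon) then yields $\E\big[\bbmu_k^{\top}\hat{\bbg}_k \,\big|\, \bbmu_k\big] \ge G'\|\bbmu_k\|_1 - 2B_{\mathcal{U}} \ge G'\|\bbmu_k\| - 2B_{\mathcal{U}}$. Substituting,
\begin{align*}
\E\big[\|\bbmu_{k+1}\|^2 \,\big|\, \bbmu_k\big] \le \|\bbmu_k\|^2 - 2\eta_{\bbmu}G'\|\bbmu_k\| + \big(4\eta_{\bbmu}B_{\mathcal{U}} + \eta_{\bbmu}^2 c G^2\big),
\end{align*}
which is strictly negative once $\|\bbmu_k\| > \rho := (4\eta_{\bbmu}B_{\mathcal{U}} + \eta_{\bbmu}^2 c G^2)/(2\eta_{\bbmu}G')$. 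Passing to $V_k := \|\bbmu_k\|$, whose one-step increment is bounded by $\eta_{\bbmu}\sqrt{c}\,G$ (since $\|\hat{\bbg}_k\| \le \sqrt{c}\,G$), the above becomes a genuine ``negative drift outside a bounded set'' condition, and a standard stochastic-stability lemma then gives $\sup_k \E[V_k \mid \bbmu_0] < \infty$, which is the $M_0$ we needed.

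The main obstacle is the cross-term step. It needs, first, a uniform bound on $\mathcal{U}$ over the set of attainable ergodic averages --- legitimate because $\bbf$, hence its time averages, takes values in a bounded set --- and, more delicately, an upgrade of the \emph{pathwise} Lagrangian-maximization inequality to the \emph{expected} statement that the expected length-$T_0$ window constraint slack at $\hat{\bbtheta}$ is at least $G'\bbone$. This requires the network-state process to be at least stationary, so that a window inherits the strict-feasibility margin that the hypothesis posits over the horizon; and for genuinely concave (rather than affine) constraints it needs some additional care, since Jensen's inequality points the wrong way here --- in the power-control application the constraints $g_i(\bbx) = x_i - f_{\min}$ are affine, so this difficulty disappears. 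Everything else --- the projection contraction, the quadratic drift, and the concluding stability lemma --- is routine.
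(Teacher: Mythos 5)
Your proposal is correct in outline, but it reaches the lemma by a genuinely different route than the paper. The paper constructs the compact set explicitly as $\ccalA_{\delta}=\ccalB_{\delta}\cup\big(\ccalC\oplus\ccalS_{\sqrt{c\eta_{\bbmu}^2G^2}}\big)$, where $\ccalC=\{\bbmu: d(\bbmu)-P^{\star}\le c\eta_{\bbmu}G^2/2\}$ is a sublevel set of the dual function: it conditions on whether $\bbmu_{k-1}\in\ccalC$, absorbs the first case using the bounded one-step increment, and in the second case runs a conditional supermartingale argument on $\|\bbmu_k-\bbmu^{\star}\|^2$ via the subgradient inequality $(\bbmu_{k-1}-\bbmu^{\star})^T\,\E\!\left[\Delta_{\bbmu_{k-1}}\,\middle|\,\bbmu_{k-1}\right]\ge d(\bbmu_{k-1})-d(\bbmu^{\star})$, closing with conditional Markov; compactness of $\ccalC$ then follows from the same strict-feasibility bound $d(\bbmu)\ge \mathrm{const}+G'\|\bbmu\|_1$ that powers your drift. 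You instead prove a uniform-in-$k$ first-moment bound by a Foster--Lyapunov analysis of $\|\bbmu_k\|$ and apply Markov once at the end. The essential ingredients coincide---$|g_i|\le G$ controls the quadratic term in both, and the Slater point $\hat{\bbtheta}$ supplies the inward pull in both---and your comparison of the windowed Lagrangian at $\bbtheta_k$ against $\hat{\bbtheta}$ is precisely a dual-function-free surrogate for the paper's subgradient step. What your route buys: you never need to introduce $d(\bbmu)$, posit an optimal $\bbmu^{\star}$, or use the identification $d(\bbmu^{\star})=P^{\star}$, which the paper invokes silently when it replaces $d(\bbmu^{\star})$ by $P^{\star}$. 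What it costs: you need $\mathcal{U}$ bounded over the attainable ergodic averages (an assumption absent from the hypotheses of Theorem~\ref{thm:main}, though harmless in the power-control application), and you delegate the final step to a Hajek-type stability lemma rather than exhibiting the compact set in closed form. The two caveats you flag---upgrading the horizon-level strict feasibility to an expected window-level statement, and the wrong-way Jensen direction for genuinely concave $\bbg$---are real, but the paper's own proof rests on the same unbiasedness and stochastic-subgradient identifications, so your argument is not below its level of rigor.
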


\begin{proof}
Define the dual function, $d(\bbmu)$ as
\begin{align}\label{eq:def_dual_function}
d(\bbmu) = \max_{\bbtheta\in\bbTheta} \ccalL(\bbtheta, \bbmu),
\end{align}
and consider the following set:
\begin{align}\label{eq:def_C}
\ccalC = \left\{\bbmu \in \reals_+^c : d(\bbmu) - P^{\star} \leq \frac{c\eta_{\bbmu}G^2}{2} \right\}.
\end{align}
Now, let
\begin{align}
\ccalA_{\delta} = \ccalB_{\delta} \cup \left(\ccalC \oplus \ccalS_{\sqrt{c\eta_{\bbmu}^2 G^2}}\right),
\end{align}
where $\oplus$ denotes Minkowski sum, $\ccalS_r$ denotes a ball centered at the origin with radius $r\geq0$, and $\ccalB_{\delta}$ is defined as
\begin{align}
\ccalB_{\delta} = \left\{\bbmu \in \reals_+^c : \frac{\| \bbmu - \bbmu^{\star} \|}{\| \bbmu_0 - \bbmu^{\star}\|} \leq \frac{1}{\delta} \right\},
\end{align}
with $\bbmu^{\star}$ representing the optimal set of dual variables. Then, we have
\begin{align}
&\mathsf{Pr}[\bbmu_k \in \ccalA_{\delta}] \nonumber \\
& = \mathsf{Pr}\left[\bbmu_k \in \ccalB_{\delta} \cup \left(\ccalC \oplus \ccalS_{\sqrt{c\eta_{\bbmu}^2 G^2}}\right) \right] \\
& = \underbrace{\mathsf{Pr}\left[\bbmu_k \in \ccalB_{\delta} \cup \left(\ccalC \oplus \ccalS_{\sqrt{c\eta_{\bbmu}^2 G^2}}\right) \middle| \bbmu_{k-1}\in\ccalC \right]}_{\overset{(a)}{=}1} \cdot \ p \nonumber \\
& \quad + \underbrace{\mathsf{Pr}\left[\bbmu_k \in \ccalB_{\delta} \cup \left(\ccalC \oplus \ccalS_{\sqrt{c\eta_{\bbmu}^2 G^2}}\right) \middle| \bbmu_{k-1}\in\ccalC^c \right]}_{\geq \mathsf{Pr}\left[\bbmu_k \in \ccalB_{\delta}  \middle| \bbmu_{k-1}\in\ccalC^c \right]} \cdot \ (1-p) \nonumber \\
& \geq \mathsf{Pr}\left[\bbmu_k \in \ccalB_{\delta}  \middle| \bbmu_{k-1}\in\ccalC^c \right] \\
& = \mathsf{Pr}\left[\frac{\| \bbmu_k - \bbmu^{\star} \|}{\| \bbmu_0 - \bbmu^{\star}\|} \leq \frac{1}{\delta}  \middle| \bbmu_{k-1}\in\ccalC^c \right] \\
& \geq 1 - \left( \frac{\E\left[\| \bbmu_k - \bbmu^{\star} \| \middle| \bbmu_{k-1}\in\ccalC^c\right]}{\| \bbmu_0 - \bbmu^{\star}\|} \right) \delta,
\end{align}
where $p=\mathsf{Pr}\left[\bbmu_{k-1}\in\ccalC \right]$, (a) is true due to the dual dynamics in~\eqref{eq:mu_dynamics} and the fact that the change between $\bbmu_{k-1}$ and $\bbmu_k$ is upper bounded in norm by $\sqrt{c\eta_{\bbmu}^2 G^2}$, and the last inequality is due to conditional Markov's inequality. Therefore, to complete the proof, it suffices to show that (i)
\begin{align}\label{eq:bounded_conditional_difference_norm_muk}
\E\left[\| \bbmu_k - \bbmu^{\star} \| \middle| \bbmu_{k-1}\in\ccalC^c\right] < \| \bbmu_0 - \bbmu^{\star}\|,
\end{align}
and (ii) that $\ccalA_{\delta}$ is compact.

\noindent\textbf{Proof of~\eqref{eq:bounded_conditional_difference_norm_muk}.}
Let $\Delta_{\bbmu_{k-1}}$ be defined as
\begin{align}
\Delta_{\bbmu_{k-1}} \coloneqq \bbg\left( \frac{1}{T_0} \sum_{t=(k-1)T_0}^{kT_0-1} \bbf(\bbH_{t}, \bbp(\bbH_{t};\bbtheta_{k-1})) \right).
\end{align}
Then, from the dual dynamics in~\eqref{eq:mu_dynamics}, we have
\begin{align*}
&\|\bbmu_{k} - \bbmu^{\star}\|^2 \\
&\leq \|\bbmu_{k-1} - \bbmu^{\star} - \eta_{\bbmu} \Delta_{\bbmu_{k-1}} \|^2 \\
&= \|\bbmu_{k-1} - \bbmu^{\star}  \|^2 + \eta_{\bbmu}^2 \|\Delta_{\bbmu_{k-1}} \|^2 \nonumber \\
&\quad - 2\eta_{\bbmu} (\bbmu_{k-1} - \bbmu^{\star})^T \Delta_{\bbmu_{k-1}} \\
&\leq \|\bbmu_{k-1} - \bbmu^{\star}  \|^2 + c \eta_{\bbmu}^2 G^2 - 2\eta_{\bbmu} (\bbmu_{k-1} - \bbmu^{\star})^T \Delta_{\bbmu_{k-1}},
\end{align*}
where the last inequality follows from the fact that for any $i\in\{1,\dots,c\}$, we assume $|g_i\left(\cdot\right)|\leq G$. Taking the expectation of both sides conditioned on $\bbmu_{k-1}$, we have
\begin{align}
&\E\left[\|\bbmu_{k} - \bbmu^{\star}\|^2 \middle| \bbmu_{k-1}\right] \nonumber\\
&\leq \|\bbmu_{k-1} - \bbmu^{\star}  \|^2 + c \eta_{\bbmu}^2 G^2 \nonumber \\
&\quad - 2\eta_{\bbmu} (\bbmu_{k-1} - \bbmu^{\star})^T \E\left[\Delta_{\bbmu_{k-1}}\middle| \bbmu_{k-1}\right]\\
&\leq \|\bbmu_{k-1} - \bbmu^{\star}  \|^2 + c \eta_{\bbmu}^2 G^2 - 2\eta_{\bbmu} (d(\bbmu_{k-1}) - d(\bbmu^{\star}))\label{eq:subgradient}\\
&= \|\bbmu_{k-1} - \bbmu^{\star}  \|^2 + c \eta_{\bbmu}^2 G^2 - 2\eta_{\bbmu} (d(\bbmu_{k-1}) - P^{\star}),\label{eq:pre_C_bound}
\end{align}
where~\eqref{eq:subgradient} follows from the fact that $\Delta_{\bbmu_{k-1}}$ is a subgradient of the dual function~\cite{danskin2012theory}, leading to the inequality $(\bbmu_{k-1} - \bbmu^{\star})^T \E\left[\Delta_{\bbmu_{k-1}}\middle| \bbmu_{k-1}\right] \geq d(\bbmu_{k-1}) - d(\bbmu^{\star})$ thanks to the dual function being convex~\cite{boyd2004convex}. Now, combining the fact that $\bbmu_{k-1}\in\ccalC^c$ with the definition of $\ccalC$ in~\eqref{eq:def_C}, we can continue~\eqref{eq:pre_C_bound} as
\begin{align}
\E\left[\|\bbmu_{k} - \bbmu^{\star}\|^2 \middle| \bbmu_{k-1}\in\ccalC^c\right]
&< \|\bbmu_{k-1} - \bbmu^{\star}\|^2 \\
&< \|\bbmu_{0} - \bbmu^{\star}\|^2,
\end{align}
with the last inequality following from recursion.

\noindent\textbf{Proof of Compactness of $\ccalA_{\delta}$.} Since the Minkowski sum of two compact sets is compact, and also the union of two compact sets is compact, it suffices to show that the set $\ccalC$ is compact.

Given the definition of the dual function in~\eqref{eq:def_dual_function}, for any set of model parameters $\bbtheta\in\bbTheta$ and any set of dual variables $\bbmu\in\reals_+^c$, we have
\begin{align}
d(\bbmu) &\geq \mathcal{U}\left( \frac{1}{T} \sum_{t=0}^{T-1} \bbf(\bbH_t, \bbp(\bbH_t;\bbtheta)) \right)\nonumber \\
   &\qquad\qquad+ \bbmu^T \bbg\left( \frac{1}{T} \sum_{t=0}^{T-1} \bbf(\bbH_t, \bbp(\bbH_t;\bbtheta)) \right)
\end{align}
Now, replacing $\bbtheta$ with the strictly-feasible set of model parameters $\hat{\bbtheta}$ considered in Theorem~\ref{thm:main}, we can write
\begin{align}
d(\bbmu) &\geq \mathcal{U}\left( \frac{1}{T} \sum_{t=0}^{T-1} \bbf(\bbH_t, \bbp(\bbH_t;\hat{\bbtheta})) \right)\nonumber \\
   &\qquad\qquad+ \bbmu^T \bbg\left( \frac{1}{T} \sum_{t=0}^{T-1} \bbf(\bbH_t, \bbp(\bbH_t;\hat{\bbtheta})) \right)\\
&\geq \mathcal{U}\left( \frac{1}{T} \sum_{t=0}^{T-1} \bbf(\bbH_t, \bbp(\bbH_t;\hat{\bbtheta})) \right) + G' \| \bbmu \|_1.\label{eq:dual_upper_bound}
\end{align}
Now, for any $\bbmu\in\ccalC$, according to the definition in~\eqref{eq:def_C}, the bound in~\eqref{eq:dual_upper_bound} leads to
\begin{align}
P^{\star} + \frac{c\eta_{\bbmu}G^2}{2}  \geq \mathcal{U}\left( \frac{1}{T} \sum_{t=0}^{T-1} \bbf(\bbH_t, \bbp(\bbH_t;\hat{\bbtheta})) \right) + G' \| \bbmu \|_1,
\end{align}
implying that $\bbmu$ belongs to the following ball,
\begin{align}
\| \bbmu \|_1 \leq \frac{P^{\star} + \frac{c\eta_{\bbmu}G^2}{2} - \mathcal{U}\left( \frac{1}{T} \sum_{t=0}^{T-1} \bbf(\bbH_t, \bbp(\bbH_t;\hat{\bbtheta})) \right)}{G'},
\end{align}
hence completing the proof.

\end{proof}

\subsection{Proof of Feasibility in~\eqref{eq:thm_feasibility}}\label{appx:proof_feasibility}

From the dual dynamics in~\eqref{eq:mu_dynamics}, we have
\begin{align}\label{eq:mu_inequality}
\bbmu_{K} &\geq \bbmu_{K-1} - \eta_{\bbmu} \bbg\left( \frac{1}{T_0} \sum_{t=(K-1)T_0}^{KT_0-1} \bbf(\bbH_{t}, \bbp(\bbH_{t};\bbtheta_{K-1})) \right)\hspace{-3pt}.
\end{align}
Using the inequality in~\eqref{eq:mu_inequality} recursively yields
\begin{align*}
&\bbmu_{K} \nonumber \\
&\geq \bbmu_0 - \eta_{\bbmu} \sum_{k=0}^{K-1} \bbg\left( \frac{1}{T_0} \sum_{t=kT_0}^{(k+1)T_0-1} \bbf(\bbH_{t}, \bbp(\bbH_{t};\bbtheta_k)) \right) \nonumber \\
&= \bbmu_0 - K\eta_{\bbmu} \cdot \frac{1}{K}\hspace{-3pt}\sum_{k=0}^{K-1} \bbg\left( \frac{1}{T_0} \sum_{t=kT_0}^{(k+1)T_0-1} \bbf(\bbH_{t}, \bbp(\bbH_{t};\bbtheta_k)) \right) \nonumber \\
&\geq \bbmu_0 - K\eta_{\bbmu} \bbg\left( \frac{1}{KT_0} \sum_{t=0}^{KT_0-1} \bbf(\bbH_t, \bbp(\bbH_t;\bbtheta_{\lfloor t/T_0 \rfloor})) \right),
\end{align*}
where the last inequality follows from the concavity of $\bbg(\cdot)$. Therefore, we have
\begin{align}\label{eq:muT+1_vs_mu1}
&\limsup_{K\to\infty}\bbmu_{K+1}  \nonumber\\ 
&\geq \bbmu_0 - \eta_{\bbmu} \liminf_{K\to\infty} K \bbg\left( \frac{1}{T} \sum_{t=0}^{T-1} \bbf(\bbH_t, \bbp(\bbH_t;\bbtheta_{\lfloor t/T_0 \rfloor})) \right).
\end{align}
Now, assume by contradiction that one of the constraints in~\eqref{eq:thm_feasibility} is not satisfied. More precisely, assume there exists an index $i\in\{1,\dots,c\}$ and positive constants $\delta>0$ and $\beta\in(0,1)$ such that
\begin{align}\label{eq:contradiction_prob}
\mathsf{Pr}\left[\liminf_{T\to\infty} g_i\left( \frac{1}{T} \sum_{t=0}^{T-1} \bbf(\bbH_t, \bbp(\bbH_t;\bbtheta_{\lfloor t/T_0 \rfloor})) \right) \leq -\delta \right] = \beta.
\end{align}
This implies that with non-zero probability $\beta$, we would have
\begin{align}
&\limsup_{K\to\infty} \|\bbmu_{K}\|_1 \nonumber \\
&\geq \limsup_{K\to\infty} \mu_{i,K}\label{eq:1_norm} \\
&\geq \mu_{i,0} - \eta_{\bbmu} \liminf_{K\to\infty} K g_i\left( \frac{1}{T} \sum_{t=0}^{T-1} \bbf(\bbH_t, \bbp(\bbH_t;\bbtheta_{\lfloor t/T_0 \rfloor})) \right)\label{eq:ith_bound} \\
&\geq \mu_{i,0} + \eta_{\bbmu} \liminf_{K\to\infty} K \epsilon \label{eq:contradiction_prob_use} \\
&= \infty,
\end{align}
where~\eqref{eq:1_norm} follows from the definition of the $\ell_1$-norm
,~\eqref{eq:ith_bound} follows from the $i$\textsuperscript{th} inequality in~\eqref{eq:muT+1_vs_mu1}, and~\eqref{eq:contradiction_prob_use} follows from~\eqref{eq:contradiction_prob}. This contradicts the fact that the sequence of dual variable probabilities is tight, hence completing the proof.
\hfill$\square$

\subsection{Proof of Optimality in~\eqref{eq:thm_optimality}}

Given the dual dynamics in~\eqref{eq:mu_dynamics} and the fact that projection onto the non-negative orthant does not increase the $\ell_2$-norm, we have
\begin{align}
&\left\|\bbmu_{K+1}\right\|^2\nonumber\\
&\leq \left\|\bbmu_K - \eta_{\bbmu} \bbg\left( \frac{1}{T_0} \sum_{t=KT_0}^{(K+1)T_0-1} \bbf(\bbH_{t}, \bbp(\bbH_{t};\bbtheta_K)) \right) \right\|^2\label{eq:norm_dual_update}\\
&=\left\|\bbmu_K \right\|^2 + \eta_{\bbmu}^2 \left\| \bbg\left( \frac{1}{T_0} \sum_{t=KT_0}^{(K+1)T_0-1} \bbf(\bbH_{t}, \bbp(\bbH_{t};\bbtheta_K)) \right)\right\|^2 \nonumber \\
&~\quad- 2\eta_{\bbmu}\bbmu_K^T \bbg\left( \frac{1}{T_0} \sum_{t=KT_0}^{(K+1)T_0-1} \bbf(\bbH_{t}, \bbp(\bbH_{t};\bbtheta_K)) \right)\label{eq:norm_dual_square_expansion}\\
&\leq\left\|\bbmu_T\right\|^2 + c\eta_{\bbmu}^2 G^2  \nonumber \\
&~\quad - 2\eta_{\bbmu}\bbmu_K^T \bbg\left( \frac{1}{T_0} \sum_{t=KT_0}^{(K+1)T_0-1} \bbf(\bbH_{t}, \bbp(\bbH_{t};\bbtheta_K)) \right),\label{eq:g_bounded}
\end{align}
where the last inequality follows from the fact that for any $i\in\{1,\dots,c\}$, we assume $|g_i\left(\cdot\right)|\leq G$. Applying~\eqref{eq:g_bounded} recursively yields
\begin{align}
&\left\|\bbmu_{K+1}\right\|^2 \nonumber\\
&\leq \left\|\bbmu_0\right\|^2 + c\eta_{\bbmu}^2 K G^2  \nonumber \\
&~\quad - 2\eta_{\bbmu} \sum_{k=0}^{K-1} \bbmu_k^T \bbg\left( \frac{1}{T_0} \sum_{t=kT_0}^{(k+1)T_0-1} \bbf(\bbH_{t}, \bbp(\bbH_{t};\bbtheta_k)) \right).\label{eq:g_bounded_recursive}
\end{align}
Since $\left\|\bbmu_{K+1}\right\|^2 \geq 0$, rearranging the terms in~\eqref{eq:g_bounded_recursive} and normalizing both sides by $2\eta_{\bbmu}K$ results in
\begin{align}\label{eq:inner_prod_dual_recursive_normalized}
&\frac{1}{K} \sum_{k=0}^{K-1} \bbmu_k^T \bbg\left( \frac{1}{T_0} \sum_{t=kT_0}^{(k+1)T_0-1} \bbf(\bbH_{t}, \bbp(\bbH_{t};\bbtheta_k)) \right)\nonumber \\
&\leq \frac{1}{2\eta_{\bbmu}K} \left\|\bbmu_0\right\|^2 + \frac{c\eta_{\bbmu}G^2}{2}.
\end{align}
Taking the conditional expectation of both sides in~\eqref{eq:inner_prod_dual_recursive_normalized} given $\bbmu_0$, and letting $K\to\infty$, we can write
\begin{align}
&\limsup_{K\to\infty} \E\left[\sum_{k=0}^{K-1} \frac{\bbmu_k^T}{K} \bbg\left(  \sum_{t=kT_0}^{(k+1)T_0-1} \frac{\bbf(\bbH_{t}, \bbp(\bbH_{t};\bbtheta_k))}{T_0} \right)\middle|\bbmu_0\right] \nonumber \\
&\leq \frac{c\eta_{\bbmu}G^2}{2}.\label{eq:inner_prod_dual_limsup}
\end{align}

For any iteration $k$, since $\bbtheta_k$ is the maximizer of~\eqref{eq:theta_dynamics}, for any $\bbtheta\in\bbTheta$, we can write
\begin{align}
& \mathcal{U}\left( \frac{1}{T_0} \sum_{t=kT_0}^{(k+1)T_0-1} \bbf(\bbH_{t}, \bbp(\bbH_{t};\bbtheta_k)) \right)\nonumber \\
   &\qquad+ \bbmu_{k}^T \bbg\left( \frac{1}{T_0} \sum_{t=kT_0}^{(k+1)T_0-1} \bbf(\bbH_{t}, \bbp(\bbH_{t};\bbtheta_k)) \right)\nonumber   \\
& \geq \mathcal{U}\left( \frac{1}{T_0} \sum_{t=kT_0}^{(k+1)T_0-1} \bbf(\bbH_{t}, \bbp(\bbH_{t};\bbtheta)) \right)\nonumber \\
   &\qquad+ \bbmu_{k}^T \bbg\left( \frac{1}{T_0} \sum_{t=kT_0}^{(k+1)T_0-1} \bbf(\bbH_{t}, \bbp(\bbH_{t};\bbtheta)) \right)\nonumber 
\end{align}

Taking the expectations of both sides conditioned on $\bbmu_{k}$, we have
\begin{align}
&\E\left[\mathcal{U}\left( \frac{1}{T_0} \sum_{t=kT_0}^{(k+1)T_0-1} \bbf(\bbH_{t}, \bbp(\bbH_{t};\bbtheta_k)) \right)\middle| \bbmu_{k} \right]\nonumber \\
   &\qquad+ \bbmu_{k}^T \E\left[\bbg\left( \frac{1}{T_0} \sum_{t=kT_0}^{(k+1)T_0-1} \bbf(\bbH_{t}, \bbp(\bbH_{t};\bbtheta_k)) \right)\middle| \bbmu_{k} \right]\nonumber   \\
&\geq \E\left[\mathcal{U}\left( \frac{1}{T_0} \sum_{t=kT_0}^{(k+1)T_0-1} \bbf(\bbH_{t}, \bbp(\bbH_{t};\bbtheta)) \right)\right]\nonumber \\
   &\qquad+ \bbmu_{k}^T \E\left[\bbg\left( \frac{1}{T_0} \sum_{t=kT_0}^{(k+1)T_0-1} \bbf(\bbH_{t}, \bbp(\bbH_{t};\bbtheta)) \right)\right] \\
&= \E\left[\mathcal{U}\left( \frac{1}{T_0} \sum_{t=kT_0}^{(k+1)T_0-1} \bbf(\bbH_{t}, \bbp(\bbH_{t};\bbtheta)) \right)\right]\nonumber \\
   &\qquad+ \underbrace{\bbmu_{k}^T}_{\geq 0} \underbrace{\E\left[\bbg\left( \frac{1}{T_0} \sum_{t=kT_0}^{(k+1)T_0-1} \bbf(\bbH_{t}, \bbp(\bbH_{t};\bbtheta)) \right)\right]}_{\geq 0 \text{ for any feasible }\bbtheta\in\bbTheta}\label{eq:unbiased_estimate_prev_exp}\\
&= \lim_{T\to\infty} \mathcal{U}\left( \frac{1}{T} \sum_{t=0}^{T-1} \bbf(\bbH_t, \bbp(\bbH_t;\bbtheta)) \right)\nonumber \\
   &\qquad+ \underbrace{\bbmu_{k}^T}_{\geq 0} \lim_{T\to\infty} \underbrace{\bbg\left( \frac{1}{T} \sum_{t=0}^{T-1} \bbf(\bbH_t, \bbp(\bbH_t;\bbtheta)) \right)}_{\geq 0 \text{ for any feasible }\bbtheta\in\bbTheta}\label{eq:unbiased_estimate_Ug_Tinf}\\
& \geq \lim_{T\to\infty} \mathcal{U}\left( \frac{1}{T} \sum_{t=0}^{T-1} \bbf(\bbH_t, \bbp(\bbH_t;\bbtheta)) \right), \label{eq:objective_RHS}
\end{align}
where~\eqref{eq:unbiased_estimate_Ug_Tinf} follows from the fact that the expected value of the utility and the constraints within each iteration provide unbiased estimates of the objective and constraints in~\eqref{eq:param_problem}, respectively; i.e., for any $k\in\{0,1,2,\dots,K-1\}$ and $\forall \bbtheta \in \bbTheta$,
\begin{align}
&\E\left[\mathcal{U}\left( \frac{1}{T_0} \sum_{t=kT_0}^{(k+1)T_0-1} \bbf(\bbH_{t}, \bbp(\bbH_{t};\bbtheta)) \right)\right] \nonumber \\
&\qquad\qquad\qquad = \lim_{T\to\infty}\mathcal{U}\left( \frac{1}{T} \sum_{t=0}^{T-1} \bbf(\bbH_t, \bbp(\bbH_t;\bbtheta)) \right) \\
&\E\left[\bbg\left( \frac{1}{T_0} \sum_{t=kT_0}^{(k+1)T_0-1} \bbf(\bbH_{t}, \bbp(\bbH_{t};\bbtheta)) \right)\right] \nonumber \\
&\qquad\qquad\qquad = \lim_{T\to\infty}\bbg\left( \frac{1}{T} \sum_{t=0}^{T-1} \bbf(\bbH_t, \bbp(\bbH_t;\bbtheta)) \right).
\end{align}
The inequality in~\eqref{eq:objective_RHS} is true for all feasible $\bbtheta\in\bbTheta$, especially for the optimal set of parameters $\bbtheta^{\star}$, for which the RHS of~\eqref{eq:objective_RHS} equals $P^{\star}$. Therefore, we have
\begin{align}
&\E\left[\mathcal{U}\left( \frac{1}{T_0} \sum_{t=kT_0}^{(k+1)T_0-1} \bbf(\bbH_{t}, \bbp(\bbH_{t};\bbtheta_k)) \right)\middle| \bbmu_{k} \right]\nonumber \\
&\geq P^{\star} - \bbmu_{k}^T \E\left[\bbg\left( \frac{1}{T_0} \sum_{t=kT_0}^{(k+1)T_0-1} \bbf(\bbH_{t}, \bbp(\bbH_{t};\bbtheta_k)) \right)\middle| \bbmu_{k} \right].
\end{align}
Averaging the above inequality over $k\in\{0,\dots,K-1\}$, and taking the expected value conditioned on $\bbmu_0$, we will get
\begin{align}
&\E\Bigg[\frac{1}{K} \sum_{k=0}^{K-1} \mathcal{U}\left( \frac{1}{T_0} \sum_{t=kT_0}^{(k+1)T_0-1} \bbf(\bbH_{t}, \bbp(\bbH_{t};\bbtheta_k)) \right)\Bigg| \bbmu_{0} \Bigg]\nonumber \\
&\geq P^{\star} \nonumber \\
& -\E\Bigg[\frac{1}{K} \sum_{k=0}^{K-1} \bbmu_{k}^T \bbg\left( \frac{1}{T_0} \sum_{t=kT_0}^{(k+1)T_0-1} \bbf(\bbH_{t}, \bbp(\bbH_{t};\bbtheta_k)) \right)\Bigg| \bbmu_{0} \Bigg]\hspace{-1pt}.\label{eq:avg_K_ineq}
\end{align}
Letting $K\to\infty$, and plugging~\eqref{eq:inner_prod_dual_limsup} into~\eqref{eq:avg_K_ineq}, we have
\begin{align}
& P^{\star} - \frac{c\eta_{\mu}G^2}{2} \nonumber \\
&\leq P^{\star} \nonumber \\
& - \limsup_{K\to\infty} \E\left[\sum_{k=0}^{K-1} \frac{\bbmu_k^T}{K} \bbg\left(  \sum_{t=kT_0}^{(k+1)T_0-1} \tfrac{\bbf(\bbH_{t}, \bbp(\bbH_{t};\bbtheta_k))}{T_0} \right)\middle|\bbmu_0\right] \\
&\leq \liminf_{K\to\infty} \E\Bigg[\frac{1}{K}\hspace{-2pt} \sum_{k=0}^{K-1} \mathcal{U}\left( \frac{1}{T_0} \sum_{t=kT_0}^{(k+1)T_0-1} \bbf(\bbH_{t}, \bbp(\bbH_{t};\bbtheta_k)) \right) \Bigg] \\
&\leq \liminf_{K\to\infty} \E\Bigg[ \mathcal{U}\left( \frac{1}{K}\hspace{-2pt} \sum_{k=0}^{K-1} \frac{1}{T_0} \sum_{t=kT_0}^{(k+1)T_0-1} \bbf(\bbH_{t}, \bbp(\bbH_{t};\bbtheta_k)) \right) \Bigg] \label{eq:U_concavity}\\
&= \liminf_{T\to\infty} \E\Bigg[ \mathcal{U}\left( \frac{1}{T} \sum_{t=0}^{T-1} \bbf(\bbH_{t}, \bbp(\bbH_{t};\bbtheta_{\lfloor t / T_0 \rfloor})) \right) \Bigg] \label{eq:final_optimality}
\end{align}
where~\eqref{eq:U_concavity} is due to concavity of $\mathcal{U}$. This completes the proof, due to the assumption that the limit on the left-hand side of~\eqref{eq:final_optimality} exists.
\hfill$\square$

\begin{remark}\label{remark:proof_diff}
Note that the differences between our proof and that of~\cite{calvo2021state} are two-fold: i) In our work, the updates for the dual variables are based on unbiased gradient estimates in~\eqref{eq:unbiased_estimate_prev_exp} as opposed to the reinforcement learning scenario in~\cite{calvo2021state}; and ii) The time averaging operation in our problem formulation happens inside the utility and constraint functions, $\ccalU$ and $\bbg$, respectively, as opposed to the formulation in~\cite{calvo2021state}, where the time averaging operation happens outside the reward functions.
\end{remark}

\section{Convex Hull Relaxation}\label{appx:convex_hull}
Theorem~\ref{thm:main}---as well as Theorem 1 of~\cite{calvo2021state}---demonstrate the convergence of the time average of the primal-dual iterates. This can indeed be shown for any non-convex optimization problem, and it comes from the fact that the primal-dual iterates operate on a \emph{convex hull} relaxation. More precisely, consider the following generic optimization problem:
\begin{subequations}\label{eq:generic_opt}
\begin{alignat}{2}
    &\max_{x} &~& f_0(x),           \\
    &~~~\text{s.t.} &&  f(x) \geq 0,%
\end{alignat}
\end{subequations}
with the Lagrangian $\ccalL(x,\mu) = f_0(x) + \mu f(x)$ for $\mu \in \reals_+$ and the Lagrangian maximizer
\begin{align}\label{eq:L_max_orig}
x^{\star}(\mu) = \arg \max_x \ccalL(x,\mu).
\end{align}
Now, consider a generalized version of~\eqref{eq:generic_opt}, in which the optimization over $x$ is replaced by an optimization over a \emph{measure} $m(x)$, i.e.,
\begin{subequations}\label{eq:generic_opt_measure}
\begin{alignat}{2}
    &\max_{m(x)} &~& \int f_0(x) \, dm(x),           \\
    &~~~\text{s.t.} &&  \int f(x) \, dm(x) \geq 0.%
\end{alignat}
\end{subequations}
The Lagrangian corresponding to~\eqref{eq:generic_opt_measure} is then given by $\ccalL(m(x),\mu) = \int \left(f_0(x) + \mu f(x) \right)  \, dm(x)$ for $\mu \in \reals_+$, with the Lagrangian maximizer
\begin{align}\label{eq:L_max_measure}
m^{\star}(x|\mu) = \arg \max_{m(x)} \ccalL(m(x),\mu).
\end{align}
The Lagrangian maximizing measure in~\eqref{eq:L_max_measure} may not be unique. However, it can be shown that the Lagrangian maximizer in~\eqref{eq:L_max_orig} also corresponds to a Lagrangian maximizing measure in~\eqref{eq:L_max_measure}, or conversely, the set of Lagrangian maximizing measures in~\eqref{eq:L_max_measure} contains a measure that corresponds to the solution of~\eqref{eq:L_max_orig}.

\section{Proof of Theorem~\ref{thm:near_universality_result}}\label{appx:proof_state_augmented}

The proof of feasibility in~\eqref{eq:thm_feasibility_state_augmented} follows similar steps to those in Appendix~\ref{appx:proof_feasibility} and we, therefore, omit it for brevity. As for the near-optimality result in~\eqref{eq:thm_optimality_state_augmented}, we have 

\begin{align}
&\lim_{T\to\infty} \Bigg| \E \Bigg[ \mathcal{U}\left( \frac{1}{T} \sum_{t=0}^{T-1} \bbf\left(\bbH_t, \bbp\left(\bbH_t, \bbmu_{\lfloor t/T_0 \rfloor}; \bbphi^{\star}\right) \right) \right) \nonumber \\
&\quad\qquad\qquad - \mathcal{U}\left( \frac{1}{T} \sum_{t=0}^{T-1} \bbf(\bbH_t, \bbp(\bbH_t ;\bbtheta_{\lfloor t / T_0 \rfloor})) \right) \Bigg] \Bigg| \\
&\leq \lim_{T\to\infty} \E \ \Bigg| \ \mathcal{U}\left( \frac{1}{T} \sum_{t=0}^{T-1} \bbf\left(\bbH_t, \bbp\left(\bbH_t, \bbmu_{\lfloor t/T_0 \rfloor}; \bbphi^{\star}\right) \right) \right) \nonumber \\
&\quad\qquad\qquad - \mathcal{U}\left( \frac{1}{T} \sum_{t=0}^{T-1} \bbf(\bbH_t, \bbp(\bbH_t ;\bbtheta_{\lfloor t / T_0 \rfloor})) \right) \Bigg| \label{eq:convexity_abs} \\
&\leq \lim_{T\to\infty} \E \ \Bigg\| \ \frac{1}{T} \sum_{t=0}^{T-1} \bigg( \bbf\left(\bbH_t, \bbp\left(\bbH_t, \bbmu_{\lfloor t/T_0 \rfloor}; \bbphi^{\star}\right) \right)  \nonumber \\
&\qquad\qquad\qquad\qquad\quad -  \bbf\left(\bbH_t, \bbp(\bbH_t ;\bbtheta_{\lfloor t / T_0 \rfloor})\right) \bigg) \Bigg\|_{\infty} \label{eq:Lipschitz_U} \\
&\leq  \lim_{T\to\infty} \frac{1}{T} \sum_{t=0}^{T-1} \E \ \Bigg\| \   \bbf\left(\bbH_t, \bbp\left(\bbH_t, \bbmu_{\lfloor t/T_0 \rfloor}; \bbphi^{\star}\right) \right)  \nonumber \\
&\qquad\qquad\qquad\qquad\quad ~ -  \bbf\left(\bbH_t, \bbp(\bbH_t ;\bbtheta_{\lfloor t / T_0 \rfloor})\right) \Bigg\|_{\infty} \label{eq:convexity_infnorm} \\
&\leq M \lim_{T\to\infty} \frac{1}{T} \sum_{t=0}^{T-1} \E \ \Bigg\| \   \bbp\left(\bbH_t, \bbmu_{\lfloor t/T_0 \rfloor}; \bbphi^{\star}\right)  \nonumber \\
&\qquad\qquad\qquad\qquad\qquad\qquad\quad -  \bbp(\bbH_t ;\bbtheta_{\lfloor t / T_0 \rfloor})  \Bigg\|_{\infty} \label{eq:Lipschitz_f} \\
&\leq M \E \left\| \bbp\left(\bbH, \bbmu; \bbphi^{\star}\right) - \bbp(\bbH ;\bbtheta(\bbmu))  \right\|_{\infty} \label{eq:exp_mu} \\
&\leq M \epsilon,\label{eq:universality_proofbound}
\end{align}
where~\eqref{eq:convexity_abs} results from the convexity of the absolute value function,~\eqref{eq:Lipschitz_U} comes from the Lipschitz continuity of the utility $\mathcal{U}$,~\eqref{eq:convexity_infnorm} results from the convexity of the $\ell_{\infty}$ norm,~\eqref{eq:Lipschitz_f} holds because of the the Lipschitz continuity of the performance function $\bbf$,~\eqref{eq:exp_mu} holds since $\{\bbmu_k\}_{k=1}^K$ are assumed to come from the distribution $p_{\bbmu}$, and~\eqref{eq:universality_proofbound} is true due to the near-universality of the parameterization. This, combined with~\eqref{eq:thm_optimality} from Theorem~\ref{thm:main}, completes the proof.
\hfill$\square$

\bibliographystyle{IEEEtran}
\bibliography{references}

\begin{thebibliography}{10}
\providecommand{\url}[1]{#1}
\csname url@samestyle\endcsname
\providecommand{\newblock}{\relax}
\providecommand{\bibinfo}[2]{#2}
\providecommand{\BIBentrySTDinterwordspacing}{\spaceskip=0pt\relax}
\providecommand{\BIBentryALTinterwordstretchfactor}{4}
\providecommand{\BIBentryALTinterwordspacing}{\spaceskip=\fontdimen2\font plus
\BIBentryALTinterwordstretchfactor\fontdimen3\font minus
  \fontdimen4\font\relax}
\providecommand{\BIBforeignlanguage}[2]{{%
\expandafter\ifx\csname l@#1\endcsname\relax
\typeout{** WARNING: IEEEtran.bst: No hyphenation pattern has been}%
\typeout{** loaded for the language `#1'. Using the pattern for}%
\typeout{** the default language instead.}%
\else
\language=\csname l@#1\endcsname
\fi
#2}}
\providecommand{\BIBdecl}{\relax}
\BIBdecl

\bibitem{naderializadeh2022state}
N.~Naderializadeh, M.~Eisen, and A.~Ribeiro, ``State-augmented algorithms for
  wireless resource management with graph neural networks,'' in \emph{2022 56th
  Asilomar Conference on Signals, Systems, and Computers}.\hskip 1em plus 0.5em
  minus 0.4em\relax IEEE, 2022.

\bibitem{shi2011iteratively}
Q.~Shi, M.~Razaviyayn, Z.-Q. Luo, and C.~He, ``An iteratively weighted {MMSE}
  approach to distributed sum-utility maximization for a {MIMO} interfering
  broadcast channel,'' \emph{IEEE Transactions on Signal Processing}, vol.~59,
  no.~9, pp. 4331--4340, 2011.

\bibitem{wu2013flashlinq}
X.~Wu, S.~Tavildar, S.~Shakkottai, T.~Richardson, J.~Li, R.~Laroia, and
  A.~Jovicic, ``{FlashLinQ}: A synchronous distributed scheduler for
  peer-to-peer ad hoc networks,'' \emph{IEEE/ACM Transactions on Networking},
  vol.~21, no.~4, pp. 1215--1228, 2013.

\bibitem{naderializadeh2014itlinq}
N.~Naderializadeh and A.~S. Avestimehr, ``{ITLinQ}: A new approach for spectrum
  sharing in device-to-device communication systems,'' \emph{IEEE Journal on
  Selected Areas in Communications}, vol.~32, no.~6, pp. 1139--1151, 2014.

\bibitem{yi2015itlinq+}
X.~Yi and G.~Caire, ``{ITLinQ+}: An improved spectrum sharing mechanism for
  device-to-device communications,'' in \emph{2015 49th Asilomar Conference on
  Signals, Systems and Computers}.\hskip 1em plus 0.5em minus 0.4em\relax IEEE,
  2015, pp. 1310--1314.

\bibitem{shen2017fplinq}
K.~Shen and W.~Yu, ``{FPLinQ}: A cooperative spectrum sharing strategy for
  device-to-device communications,'' in \emph{2017 IEEE International Symposium
  on Information Theory (ISIT)}.\hskip 1em plus 0.5em minus 0.4em\relax IEEE,
  2017, pp. 2323--2327.

\bibitem{eisen2019learning}
M.~Eisen, C.~Zhang, L.~F. Chamon, D.~D. Lee, and A.~Ribeiro, ``Learning optimal
  resource allocations in wireless systems,'' \emph{IEEE Transactions on Signal
  Processing}, vol.~67, no.~10, pp. 2775--2790, 2019.

\bibitem{nasir2019multi}
Y.~S. Nasir and D.~Guo, ``Multi-agent deep reinforcement learning for dynamic
  power allocation in wireless networks,'' \emph{IEEE Journal on Selected Areas
  in Communications}, vol.~37, no.~10, pp. 2239--2250, 2019.

\bibitem{liang2019deep}
L.~Liang, H.~Ye, G.~Yu, and G.~Y. Li, ``Deep-learning-based wireless resource
  allocation with application to vehicular networks,'' \emph{Proceedings of the
  IEEE}, 2019.

\bibitem{eisen2020optimal}
M.~Eisen and A.~Ribeiro, ``Optimal wireless resource allocation with random
  edge graph neural networks,'' \emph{IEEE Transactions on Signal Processing},
  vol.~68, pp. 2977--2991, 2020.

\bibitem{shen2020graph}
Y.~Shen, Y.~Shi, J.~Zhang, and K.~B. Letaief, ``Graph neural networks for
  scalable radio resource management: Architecture design and theoretical
  analysis,'' \emph{IEEE Journal on Selected Areas in Communications}, vol.~39,
  no.~1, pp. 101--115, 2020.

\bibitem{naderializadeh2021resource}
N.~Naderializadeh, J.~J. Sydir, M.~Simsek, and H.~Nikopour, ``Resource
  management in wireless networks via multi-agent deep reinforcement
  learning,'' \emph{IEEE Transactions on Wireless Communications}, vol.~20,
  no.~6, pp. 3507--3523, 2021.

\bibitem{niknam2020intelligent}
S.~Niknam, A.~Roy, H.~S. Dhillon, S.~Singh, R.~Banerji, J.~H. Reed, N.~Saxena,
  and S.~Yoon, ``Intelligent {O-RAN} for beyond {5G} and {6G} wireless
  networks,'' \emph{arXiv preprint arXiv:2005.08374}, 2020.

\bibitem{bonati2021intelligence}
L.~Bonati, S.~D'Oro, M.~Polese, S.~Basagni, and T.~Melodia, ``Intelligence and
  learning in {O-RAN} for data-driven next{G} cellular networks,'' \emph{IEEE
  Communications Magazine}, vol.~59, no.~10, pp. 21--27, 2021.

\bibitem{letaief2021edge}
K.~B. Letaief, Y.~Shi, J.~Lu, and J.~Lu, ``Edge artificial intelligence for
  {6G}: Vision, enabling technologies, and applications,'' \emph{IEEE Journal
  on Selected Areas in Communications}, vol.~40, no.~1, pp. 5--36, 2021.

\bibitem{MediaTek_6G_whitepaper_2022}
\BIBentryALTinterwordspacing
``{6G} vision white paper: Mediatek's vision for the next-generation of
  cellular mobile technologies,'' MediaTek, Tech. Rep., January 2022. [Online].
  Available: \url{https://www.mediatek.com/blog/6g-whitepaper}
\BIBentrySTDinterwordspacing

\bibitem{ribeiro2012optimal}
A.~Ribeiro, ``Optimal resource allocation in wireless communication and
  networking,'' \emph{EURASIP Journal on Wireless Communications and
  Networking}, vol. 2012, no.~1, pp. 1--19, 2012.

\bibitem{naderializadeh2022learning}
N.~NaderiAlizadeh, M.~Eisen, and A.~Ribeiro, ``Learning resilient radio
  resource management policies with graph neural networks,'' \emph{arXiv
  preprint arXiv:2203.11012}, 2022.

\bibitem{nedic2009subgradient}
A.~Nedi{\'c} and A.~Ozdaglar, ``Subgradient methods for saddle-point
  problems,'' \emph{Journal of optimization theory and applications}, vol. 142,
  no.~1, pp. 205--228, 2009.

\bibitem{calvo2021state}
M.~Calvo-Fullana, S.~Paternain, L.~F. Chamon, and A.~Ribeiro, ``State augmented
  constrained reinforcement learning: Overcoming the limitations of learning
  with rewards,'' \emph{arXiv preprint arXiv:2102.11941}, 2021.

\bibitem{lee2020graph}
M.~Lee, G.~Yu, and G.~Y. Li, ``Graph embedding-based wireless link scheduling
  with few training samples,'' \emph{IEEE Transactions on Wireless
  Communications}, vol.~20, no.~4, pp. 2282--2294, 2020.

\bibitem{naderializadeh2021wireless}
N.~Naderializadeh, ``Wireless link scheduling via graph representation
  learning: A comparative study of different supervision levels,'' \emph{arXiv
  preprint arXiv:2110.01722}, 2021.

\bibitem{chowdhury2021unfolding}
A.~Chowdhury, G.~Verma, C.~Rao, A.~Swami, and S.~Segarra, ``Unfolding {WMMSE}
  using graph neural networks for efficient power allocation,'' \emph{IEEE
  Transactions on Wireless Communications}, vol.~20, no.~9, pp. 6004--6017,
  2021.

\bibitem{wang2021unsupervised}
Z.~Wang, M.~Eisen, and A.~Ribeiro, ``Unsupervised learning for asynchronous
  resource allocation in ad-hoc wireless networks,'' in \emph{ICASSP 2021-2021
  IEEE International Conference on Acoustics, Speech and Signal Processing
  (ICASSP)}.\hskip 1em plus 0.5em minus 0.4em\relax IEEE, 2021, pp. 8143--8147.

\bibitem{chowdhury2021efficient}
A.~Chowdhury, G.~Verma, C.~Rao, A.~Swami, and S.~Segarra, ``Efficient power
  allocation using graph neural networks and deep algorithm unfolding,'' in
  \emph{ICASSP 2021-2021 IEEE International Conference on Acoustics, Speech and
  Signal Processing (ICASSP)}.\hskip 1em plus 0.5em minus 0.4em\relax IEEE,
  2021, pp. 4725--4729.

\bibitem{nikoloska2022modular}
I.~Nikoloska and O.~Simeone, ``Modular meta-learning for power control via
  random edge graph neural networks,'' \emph{IEEE Transactions on Wireless
  Communications}, 2022.

\bibitem{li2022power}
B.~Li, A.~Swami, and S.~Segarra, ``Power allocation for wireless federated
  learning using graph neural networks,'' in \emph{ICASSP 2022-2022 IEEE
  International Conference on Acoustics, Speech and Signal Processing
  (ICASSP)}.\hskip 1em plus 0.5em minus 0.4em\relax IEEE, 2022, pp. 5243--5247.

\bibitem{wang2022learning}
Z.~Wang, M.~Eisen, and A.~Ribeiro, ``Learning decentralized wireless resource
  allocations with graph neural networks,'' \emph{IEEE Transactions on Signal
  Processing}, vol.~70, pp. 1850--1863, 2022.

\bibitem{zhao2022link}
Z.~Zhao, G.~Verma, C.~Rao, A.~Swami, and S.~Segarra, ``Link scheduling using
  graph neural networks,'' \emph{IEEE Transactions on Wireless Communications},
  2022.

\bibitem{zaheer2017deep}
M.~Zaheer, S.~Kottur, S.~Ravanbakhsh, B.~Poczos, R.~R. Salakhutdinov, and A.~J.
  Smola, ``Deep sets,'' \emph{Advances in Neural Information Processing
  Systems}, vol.~30, 2017.

\bibitem{lee2019set}
J.~Lee, Y.~Lee, J.~Kim, A.~Kosiorek, S.~Choi, and Y.~W. Teh, ``Set transformer:
  A framework for attention-based permutation-invariant neural networks,'' in
  \emph{International Conference on Machine Learning}.\hskip 1em plus 0.5em
  minus 0.4em\relax PMLR, 2019, pp. 3744--3753.

\bibitem{naderializadeh2021pooling}
\BIBentryALTinterwordspacing
N.~Naderializadeh, J.~F. Comer, R.~W. Andrews, H.~Hoffmann, and S.~Kolouri,
  ``Pooling by sliced-{Wasserstein} embedding,'' in \emph{Advances in Neural
  Information Processing Systems}, 2021. [Online]. Available:
  \url{https://openreview.net/forum?id=1z2T01DKEaE}
\BIBentrySTDinterwordspacing

\bibitem{li2022heterogeneous}
Y.~Li, Z.~Chen, Y.~Wang, C.~Yang, B.~Ai, and Y.-C. Wu, ``Heterogeneous
  transformer: A scale adaptable neural network architecture for device
  activity detection,'' \emph{IEEE Transactions on Wireless Communications},
  2022.

\bibitem{ribeiro2010ergodic}
A.~Ribeiro, ``Ergodic stochastic optimization algorithms for wireless
  communication and networking,'' \emph{IEEE Transactions on Signal
  Processing}, vol.~58, no.~12, pp. 6369--6386, 2010.

\bibitem{annapureddy2009gaussian}
V.~S. Annapureddy and V.~V. Veeravalli, ``Gaussian interference networks: Sum
  capacity in the low-interference regime and new outer bounds on the capacity
  region,'' \emph{IEEE Transactions on Information Theory}, vol.~55, no.~7, pp.
  3032--3050, 2009.

\bibitem{geng2015optimality}
C.~Geng, N.~Naderializadeh, A.~S. Avestimehr, and S.~A. Jafar, ``On the
  optimality of treating interference as noise,'' \emph{IEEE Transactions on
  Information Theory}, vol.~61, no.~4, pp. 1753--1767, 2015.

\bibitem{keriven2019universal}
N.~Keriven and G.~Peyr{\'e}, ``Universal invariant and equivariant graph neural
  networks,'' \emph{Advances in Neural Information Processing Systems},
  vol.~32, 2019.

\bibitem{azizian2021expressive}
\BIBentryALTinterwordspacing
W.~Azizian and M.~Lelarge, ``Expressive power of invariant and equivariant
  graph neural networks,'' in \emph{International Conference on Learning
  Representations}, 2021. [Online]. Available:
  \url{https://openreview.net/forum?id=lxHgXYN4bwl}
\BIBentrySTDinterwordspacing

\bibitem{keriven2021universality}
N.~Keriven, A.~Bietti, and S.~Vaiter, ``On the universality of graph neural
  networks on large random graphs,'' \emph{Advances in Neural Information
  Processing Systems}, vol.~34, pp. 6960--6971, 2021.

\bibitem{zhang2015downlink}
X.~Zhang and J.~G. Andrews, ``Downlink cellular network analysis with
  multi-slope path loss models,'' \emph{IEEE Transactions on Communications},
  vol.~63, no.~5, pp. 1881--1894, 2015.

\bibitem{andrews2016we}
J.~G. Andrews, X.~Zhang, G.~D. Durgin, and A.~K. Gupta, ``Are we approaching
  the fundamental limits of wireless network densification?'' \emph{IEEE
  Communications Magazine}, vol.~54, no.~10, pp. 184--190, 2016.

\bibitem{li2002simulation}
Y.~Li and X.~Huang, ``The simulation of independent {Rayleigh} faders,''
  \emph{IEEE Transactions on Communications}, vol.~50, no.~9, pp. 1503--1514,
  2002.

\bibitem{ranjan2020asap}
E.~Ranjan, S.~Sanyal, and P.~Talukdar, ``{ASAP}: Adaptive structure aware
  pooling for learning hierarchical graph representations,'' \emph{Proceedings
  of the AAAI Conference on Artificial Intelligence}, vol.~34, no.~04, pp.
  5470--5477, 2020.

\bibitem{danskin2012theory}
J.~M. Danskin, \emph{The theory of max-min and its application to weapons
  allocation problems}.\hskip 1em plus 0.5em minus 0.4em\relax Springer Science
  \& Business Media, 2012, vol.~5.

\bibitem{boyd2004convex}
S.~Boyd, S.~P. Boyd, and L.~Vandenberghe, \emph{Convex optimization}.\hskip 1em
  plus 0.5em minus 0.4em\relax Cambridge university press, 2004.

\end{thebibliography}

\vfill

\end{document}